\def\eqref#1{equation~\ref{#1}}
\def\1{\bm{1}}
\DeclareMathAlphabet{\mathsfit}{\encodingdefault}{\sfdefault}{m}{sl}
\SetMathAlphabet{\mathsfit}{bold}{\encodingdefault}{\sfdefault}{bx}{n}
\definecolor{cornellred}{rgb}{0.7, 0.11, 0.11}
\definecolor{cornellblue}{rgb}{0.0, 0.4, 0.6}
\definecolor{myred}{RGB}{214, 8, 59}
\definecolor{myblue}{RGB}{0, 115, 207}
\definecolor{reviseblue}{RGB}{0, 0, 0}
\definecolor{mygreen}{RGB}{110,180,63}
\newtheorem{theorem}{Theorem}
\newcommand{\x}{x}
\newcommand{\xori}{x_\text{ori}}
\newcommand{\xadv}{x_\text{adv}}
\newcommand{\ytar}{y_\text{tar}}
\newcommand{\padv}{p_\text{adv}}
\newcommand{\pvic}{p_\text{vic}}
\newcommand{\pdis}{p_\text{dis}}
\newcommand{\cori}{\mathcal{C}_\text{ori}}
\newcommand{\pdisone}{\pdis^{\scriptscriptstyle{(1)}}}
\newcommand{\pdistwo}{\pdis^{\scriptscriptstyle{(2)}}}
\newcommand{\pshare}{p_{\text{share}}}
\title{Concept-based Adversarial Attack:\\a Probabilistic Perspective}
\author{Andi Zhang\thanks{Corresponding Author.} \\
University of Warwick\\
University of Manchester\\
\texttt{az381@cantab.ac.uk} \\
\And
Xuan Ding \\
The Chinese University of Hong Kong, Shenzhen\\
\texttt{202322011119@mail.bnu.edu.cn} \\
\And
Steven McDonagh \\
University of Edinburgh \\
\texttt{s.mcdonagh@ed.ac.uk}
\And
Samuel Kaski \\
University of Manchester \\
University of Aalto\\
\texttt{samuel.kaski@aalto.fi}
}
\begin{document}

\maketitle

\begin{abstract}
We propose a concept-based adversarial attack framework that extends beyond single-image perturbations by adopting a probabilistic perspective. Rather than modifying a single image, our method operates on an entire concept --- represented by a {\color{reviseblue} distribution} --- to generate diverse adversarial examples. Preserving the concept is essential, as it ensures that the resulting adversarial images remain identifiable as instances of the original underlying category or identity. By sampling from this concept-based adversarial distribution, we generate images that maintain the original concept but vary in pose, viewpoint, or background, thereby misleading the classifier. Mathematically, this framework remains consistent with traditional adversarial attacks in a principled manner. Our theoretical and empirical results demonstrate that concept-based adversarial attacks yield more diverse adversarial examples and effectively preserve the underlying concept, while achieving higher attack efficiency. Code and examples can be found at \url{https://github.com/andiac/ConceptAdv}.
\end{abstract}

\section{Introduction}
Adversarial attacks aim to deceive a classifier while preserving the original meaning of the input object \citep{dalvi2004adversarial, lowd2005adversarial, lowd2005good, biggio2018wild}. We refer to the manipulated instance as an adversarial example. Early work by \citet{szegedy2013intriguing} and \citet{goodfellow2014explaining} introduced adversarial attacks against deep learning models for images.

In the image-based adversarial setting, it is widely accepted that controlling the geometric distance between the adversarial example and its original image is crucial for maintaining the original image's meaning. Consequently, many adversarial attack algorithms constrain perturbations using norms such as $L_1$, $L_2$, or $L_\infty$. Moreover, given the rapid progress in machine learning research, fair comparisons across different adversarial methods have become essential. Numerous benchmarks and competitions \citep{madry2017towards, croce2020robustbench, dong2021adversarial} therefore focus on attack success rates under the constraint that geometric distance does not exceed a threshold $\delta$.

However, as adversarial defense techniques improve, small geometric perturbations alone increasingly fail to generate adversarial examples that reliably fool classifiers, particularly when strong transferability is required \citep{song2018constructing, xiao2018spatially, bhattad2019unrestricted}. This shortcoming has led researchers to explore unrestricted adversarial attacks, which involve larger geometric perturbations. Although ``unrestricted'' implies that adversarial examples need not be bounded by strict geometric distance, these examples must still remain faithful to the semantics of the original image; otherwise, the core goal of preserving the input’s meaning is lost.

% \citet{zhang2024constructing} introduced a probabilistic perspective on adversarial attacks, showing that traditional geometric constraints can be interpreted as specific ``distance'' distributions $\pdis$. Generating adversarial examples then amounts to sampling from the product of $\pdis$ and the ``victim'' distribution $\pvic$ (representing the target classifier under attack). This approach, depicted on the left side of Figure~\ref{fig:titlefig}, indicates that the distance distribution $\pdis$ need not stem solely from geometric constraints; instead, one can fit a probabilistic generative model (PGM) around the original image so that the PGM's semantic understanding implicitly defines a semantics-based distance.

\citet{zhang2024constructing} introduced a probabilistic perspective on adversarial attacks, demonstrating that traditional geometric constraints can be interpreted as specific ``distance'' distributions $\pdis$. Under this view, generating adversarial examples amounts to sampling from the product of $\pdis$ and the ``victim'' distribution $\pvic$, which represents the target classifier under attack. Importantly, $\pdis$ need not be induced solely by geometric distance. Instead, one can fit a probabilistic generative model (PGM) around the original image, allowing the PGM's semantic representation to implicitly define a semantics-based notion of distance. As illustrated on the left side of Figure~\ref{fig:titlefig}, \citet{zhang2024constructing} indicate that $\pdis$ should be centered on the original image.

Building on \citet{zhang2024constructing}'s probabilistic perspective, we expand the distance distribution $\pdis$ from operating on a single image to operating on an entire concept{\color{reviseblue}, which can be represented by a probability distribution over images that correspond to the same underlying object, identity or category.} As shown on the right side of Figure~\ref{fig:titlefig}, this generalization introduces a new class of adversarial attacks. Rather than perturbing a single image, we generate a fresh image that captures the same underlying concept yet deceives the classifier. We refer to this approach as a concept-based adversarial attack. Mathematically, it remains consistent with traditional adversarial attacks when viewed under the probabilistic framework. As we demonstrate, broadening the distance distribution to concept-level information reduces its gap from the victim distribution $\pvic$, resulting in substantially higher attack success rates. 

\begin{figure}[t]
  \centering
  \input{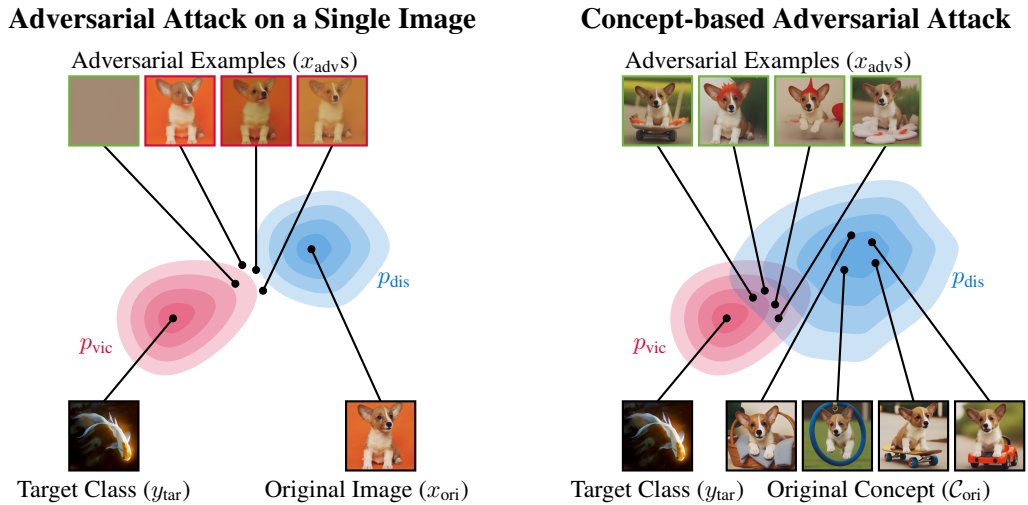}
  \caption{Comparison of a single‐image adversarial attack (left) versus our proposed concept‐based adversarial attack (right). In both cases, adversarial examples $\xadv$ are drawn from the product of a distance distribution $\pdis$ and a victim distribution $\pvic$. On the left, $\pdis$ is centered on a single image $\xori$, so its overlap with $\pvic$ is small. Consequently, adversarial examples that successfully fool the victim classifier typically lose the original image’s meaning, whereas those that preserve the original meaning fail to deceive the classifier. In contrast, on the right, $\pdis$ spans the original concept $\cori$, greatly increasing overlap with $\pvic$. As a result, the generated adversarial examples both maintain the concept’s meaning and easily deceive the classifier. (A \textbf{\color{mygreen}green} image border indicates an example that successfully fools the classifier; \textbf{\color{myred}red} indicates failure.) \label{fig:titlefig}}
  %Note that in both panels $\pdis$ is obtained by fine-tuning a diffusion model; the only difference is whether the model is fine-tuned on a single image \citep{zhang2024constructing} (referred to as ProbAttack) or on a set of images representing a concept.
  %Note that this figure only considers the case where $\pdis$ is derived from a well‐trained PGM (rather than a general distribution), see Section~\ref{sec:overlap} for details.
\end{figure}

Our main contributions are as follows:
\begin{itemize}
    \item \textbf{Concept-based adversarial attack}: We introduce a new type of adversarial attack that moves beyond single-image perturbations to a concept {\color{reviseblue}represented by a distribution}, this new approach aligns with traditional adversarial attacks in a principled manner.
    \item \textbf{Concept augmentation}: We propose a practical concept augmentation strategy using modern generative models, enhancing the diversity of the distance distribution.
    \item \textbf{Theoretical and empirical validation}: We provide both theoretical proof and experimental evidence showing that expanding the attack from a single image to an entire concept reduces the distance between $\pvic$ and $\pdis$, boosting the attack efficiency.    
    \item \textbf{Higher success rates}: Our experiments confirm that concept-based adversarial attacks achieve higher targeted attack success rates while preserving the original concept.
    \item \textbf{Practical Guidelines and Scenarios}: We provide practical guidelines and example application scenarios, detailed in Appendix~\ref{app:pracguidelines}.
\end{itemize}

\section{Preliminaries}

\subsection{Probabilistic Generative Models (PGMs) and Their Likelihoods}
% The goal of probabilistic generative models is to build a parameterized model \(p_\theta\) that captures the true distribution \(p\). However, in practice, we only have access to a finite dataset \(\mathcal{D} = \{x_1, \dots, x_n\}\), where \(n\) is the size of the dataset. A common approach is to maximize the likelihood of this dataset. 

% For image modeling, popular models such as Variational Autoencoders (VAEs) \citep{kingma2013auto} and diffusion models \citep{song2019generative, ho2020denoising} typically do not maximize the likelihood directly. Instead, they maximize a lower bound on the log-likelihood (often referred to as the evidence lower bound, or ELBO). Given an image \(x\), the corresponding likelihood \(p_\theta(x)\) can thus be estimated via this lower bound. In practice, %when people try to calculate the likelihood, they 
% the process of likelihood evaluation involves 
% calculating this ELBO to estimate the likelihood \citep{burda2015importance, nalisnick2018deep}.

The goal of probabilistic generative models is to learn a parameterized distribution $p_\theta$ that approximates the true distribution $p$. In practice, we only observe a finite dataset $\mathcal{D} = \{x_1, \dots, x_n\}$, and training is typically done by maximizing its likelihood. For image modeling, popular approaches such as VAEs \citep{kingma2013auto} and diffusion models \citep{song2019generative, ho2020denoising} optimize a lower bound on the log-likelihood (the ELBO) rather than the likelihood itself. Thus, likelihood estimation in practice amounts to computing this ELBO \citep{burda2015importance, nalisnick2018deep}.

\subsection{Adversarial Attack}
Given a classifier \(C: [0,1]^n \rightarrow \mathcal{Y}\), where \(n\) is the input dimension and \(\mathcal{Y}\) is the label space, consider an original image \(\xori \in [0,1]^n\) and a target label \(\ytar \in \mathcal{Y}\). The goal of a targeted adversarial attack is to construct an adversarial example \(\xadv\) such that \(C(\xadv) = \ytar\) while keeping \(\xadv\) close to \(\xori\). The corresponding optimization problem is
\[
    \min \mathcal{D}(\xori, \xadv)
    \quad \text{subject to} \quad
    C(\xadv) = \ytar \quad \text{and} \quad
    \xadv \in [0, 1]^n,
\]
where \(\mathcal{D}\) measures the distance (similarity) between \(\xori\) and \(\xadv\), typically via an \(\mathcal{L}_1\), \(\mathcal{L}_2\), or \(\mathcal{L}_\infty\) norm. Directly solving this constrained optimization can be challenging. To address this, \citet{szegedy2013intriguing} propose a relaxation:
\begin{equation}
\label{eq:finalopti}
    \min \; \mathcal{D}(\xori, \xadv) \;+\; c\, f(\xadv, \ytar)
    \quad \text{subject to} \quad
    \xadv \in [0,1]^n,
\end{equation}
where \(c\) is a constant, and \(f\) is an objective function that guides the classifier’s predictions toward the target label. In \cite{szegedy2013intriguing}'s work, \(f\) is taken to be the cross-entropy loss; \citet{carlini2017towards} present additional choices for \(f\). 

\subsection{Probabilistic Adversarial Attack}
\label{sec:probattack}
By employing Langevin Dynamics as an optimizer for \eqref{eq:finalopti}, \citet{zhang2024constructing} derive a probabilistic perspective on adversarial attacks. They introduce the adversarial distribution:
\begin{equation}
\label{eq:padv}
 \padv(\xadv \mid \xori, \ytar) \propto \pvic(\xadv \mid \ytar) \, \pdis(\xadv \mid \xori),
\end{equation}
where \(\pvic(\xadv \mid \ytar) \propto \exp\bigl(-c \, f(\xadv, \ytar)\bigr)\) is the ``victim'' distribution emphasizing misclassification toward \(\ytar\), and \(\pdis(\xadv \mid \xori) \propto \exp\bigl(-\mathcal{D}(\xori, \xadv)\bigr)\) is the ``distance'' distribution around \(\xori\). This formulation leverages the fact that Langevin Dynamics converges to the corresponding Gibbs distribution \citep{lamperski2021projected}, thereby providing a probabilistic interpretation of adversarial attack generation.

This probabilistic perspective aligns with traditional geometry-based adversarial attacks. For example, if \(\mathcal{D}\) is the \(\mathcal{L}_1\) norm, then \(\pdis(\xadv \mid \xori) \propto \exp(-\lVert \xadv - \xori\rVert_{1})\) takes the form of a Laplace distribution. Similarly, if \(\mathcal{D}\) is the squared \(\mathcal{L}_2\) norm, then \(\pdis(\xadv \mid \xori) \propto \exp(-\lVert \xadv - \xori\rVert_{2}^2)\) is a Gaussian distribution.

\citet{zhang2024constructing} indicate that the distance distribution \(\pdis\) can be any distribution centered around \(\xori\), meaning the choice of \(\pdis\) implicitly defines the distance \(\mathcal{D}\). Consequently, using a PGM centered on \(\xori\) as \(\pdis\) yields a semantic-aware notion of distance. By then sampling from the corresponding adversarial distribution \(\padv\), one can generate semantic-aware adversarial examples.

\section{Concept-based Adversarial Attack}
\label{sec:conceptadv}

% concept is represented by a distribution!! and then in practise, the distribution can be represented by a model or a set of images...

{\color{reviseblue}

\subsection{Concept Distribution}
We aim to extend adversarial attacks from operating on a single original image to operating on an original concept \(\mathcal{C}_{\text{ori}}\). A concept is inherently abstract and subjective: it may refer to a specific physical object (e.g., a rubber duck), a particular identity such as the long-eared corgi puppy with a lighter left cheek shown in Figure 1, or a broader class such as ``corgi,'' regardless of age, size, or specific attributes. \textbf{Although defining a concept in an absolute sense is difficult \citep{poeta2023concept}, we can represent it through a concept distribution}, denoted by $p(\cdot \mid \mathcal{C}_{\text{ori}})$.

This distribution serves as an interface through which users can specify what the concept is. In practice, we recommend two ways for users to instantiate their notion of a concept:

\begin{itemize}
   \item \textbf{Direct specification of a concept distribution}: The user may already possess a generative model or any other mechanism that directly provides a distribution \(p(\cdot \mid \mathcal{C}_{\text{ori}})\) representing the concept.
   \item \textbf{Constructing the distribution from an image set}: The user may collect a set of images depicting the desired concept (e.g., different poses of the same corgi in Figure 1), and then train or fine-tune a probabilistic generative model (PGM) on this set to obtain the corresponding concept distribution \(p(\cdot \mid \mathcal{C}_{\text{ori}})\). Here, $\cori$ is a set of images \(\cori = \{\xori^{(1)}, \dots, \xori^{(K)}\}\), where \(K\) is the number of images depicting \(\cori\). .
\end{itemize}

In the remainder of this paper, we demonstrate the second approach, as it allows us to clearly showcase how concept-level information can be incorporated into adversarial attacks using accessible image data and standard generative modeling pipelines.

\subsection{Concept Distribution as a Distance distribution}

}

% Building on the probabilistic perspective of adversarial attacks \citep{zhang2024constructing}, we can generalize the notion of an original image \(\xori\) to an original concept \(\cori\). For example, \(\cori\) could represent a specific object or identity, such as the small Corgi (dog) with slightly elongated ears in Figure~\ref{fig:titlefig}. Rather than applying imperceptible perturbations to \(\xori\), concept-based adversarial attacks aim to generate an adversarial example \(\xadv\) that preserves the conceptual identity of \(\cori\) (i.e., the same Corgi) while allowing variation in pose, background, and viewpoint. 

% Technically, defining an explicit distance \(\mathcal{D}(\xadv, \cori)\) is challenging. However, the probabilistic perspective offers a natural alternative: we define a distance distribution \(\pdis(\cdot \mid \cori)\) that implicitly defines a distance measure. 

{\color{reviseblue}
Building on the probabilistic perspective of adversarial attacks \citep{zhang2024constructing}, a distribution used as a distance distribution can implicitly define a notion of distance. Therefore, by using the concept distribution defined in the previous subsection as the distance distribution, we implicitly define the distance between an adversarial example and the underlying concept.} Formally,
\begin{equation}
\label{eq:padvset}
 \padv(\xadv \mid \cori, \ytar) \propto \pvic(\xadv \mid \ytar) \, \pdis(\xadv \mid \cori)
\end{equation}
where \(\padv(\cdot \mid \cori, \ytar)\) is the adversarial distribution relative to the concept \(\cori\) and the target label $\ytar$. The distance distribution \(\pdis(\cdot\mid\cori)\) is {\color{reviseblue} the concept distribution}.

Comparing (\ref{eq:padv}) and (\ref{eq:padvset}) shows that the only modification is replacing $\xori$ with $\cori$. Hence, the probabilistic adversarial attack \citep{zhang2024constructing} is the special case $\cori=\{\xori\}$ (i.e., $|\cori|=1$). This straightforward and compact expansion allows us to heavily reuse the implementation of the probabilistic adversarial attack, making probabilistic adversarial attack (ProbAttack\footnote{Both \citet{zhang2024constructing} and our work present a methodology applicable to any PGM (e.g., VAE, energy-based, or diffusion). Since diffusion models are the most powerful PGMs, we adopt them throughout this paper. We use ProbAttack to denote the diffusion-based implementation of \citet{zhang2024constructing}.}) a natural ablation baseline for our method.

While intuition suggests that expanding the perturbation space should produce stronger adversarial examples, rigorous justification is needed. In the following sections, we adopt the probabilistic perspective, presenting both theoretical analysis and empirical evidence to demonstrate how this expansion enhances attack effectiveness without compromising perceptual quality.

% Intuitively, enlarging the space of allowable perturbations can yield more effective adversarial examples, but intuition alone is insufficient. In the remainder of this section, we adopt the probabilistic perspective and provide theoretical and empirical analyses showing why this simple extension increases attack success while preserving perceptual quality.

\subsection{Concept-based Adversarial Attacks Generate Higher Quality Adversarial Examples}

From the probabilistic perspective, generating adversarial examples amounts to sampling from the overlap between \(\pvic\) and \(\pdis\), since \(\padv\) is proportional to their product \citep{hinton2002training}. For the common case of attacking a single original image \(\xori\), this procedure is illustrated on the left side of Figure~\ref{fig:titlefig}. Empirical research has shown that modern robust classifiers can produce high-quality images of the target classes \citep{santurkar2019image, zhang2024constructing, zhu2021towards}, causing \(\pvic\) to concentrate on the semantics of those classes. Consequently, as \(\xori\) does not depict the target class, the intersection between \(\pvic\) and \(\pdis\) is small. Since high-quality images rarely appear in low-density regions of the distribution, the resulting adversarial examples drawn from this limited intersection tend to be of lower quality.

We claim that our concept-based adversarial attacks reduce the distance between the $\pvic$ and the $\pdis$, thereby increasing their overlap. This broader overlap yields higher-quality adversarial examples and improves targeted attack success rates, as illustrated on the right side of Figure~\ref{fig:titlefig}.

To justify this claim, we must address two key questions:
\begin{itemize}
    \item Do concept-based adversarial attacks indeed decrease the distance between \(\pvic\) and \(\pdis\)?
    \item Do they genuinely produce better adversarial examples?
\end{itemize}
The remainder of this paper focuses on answering these questions.

\subsection{The Distance Between Distributions: A Theoretical Study}

In the whitebox adversarial attack scenario, both the victim classifier and target label are provided, which means $\pvic$ remains fixed. {\color{reviseblue} Let $\pdis (\cdot\mid\cori)$ be a Gibbs distribution of the form $\pdis (x\mid \cori)\propto \exp(-\beta D(x, \mu))$, where $D$ is a distance function measuring the discrepancy between a point $x$ and the concept center $\mu$. The following theorem shows that, under suitable conditions, increasing the dispersion of $\pdis$ (i.e., decreasing $\beta$\footnote{\color{reviseblue}A smaller $\beta$ corresponds to a higher ``temperature'' in the Gibbs distribution, which makes $\pdis$ more dispersed.}) reduces the KL divergence between $\pvic$ and $\pdis$:
\begin{theorem}
\label{thm:kldivgibbs}
    Let $p$ be a probability distribution and $q$ be a Gibbs distribution of the form \[q(x)=\frac{\exp(-\beta D(x,\mu))}{Z(\beta)},\]where $Z(\beta)$ is the normalizing constant, $\mu$ is a constant and $D$ is a distance function. Then $KL(p\,\|\,q)$ is a increasing function of $\beta$ whenever $\mathbb{E}_{X\sim p} [D(X, \mu)] > \mathbb{E}_{X\sim q}[D(X,\mu)]$.
\end{theorem}
The proof is provided in Appendix \ref{app:thms}. By Theorem~\ref{thm:kldivgibbs}, we see that $KL(\pvic \,\|\, \pdis)$ decreases as $\beta$ decreases, provided that $\mathbb{E}_{X\sim p} [D(X, \mu)] > \mathbb{E}_{X\sim q}[D(X,\mu)]$. In the probabilistic adversarial attack framework, this condition is always satisfied because samples drawn from $\pvic$ lie farther from the mean of $\pdis$ than samples drawn from $\pdis$ itself. If this were not the case, the fundamental assumption that $\pdis$ represents a distance distribution concentrated around $\xori$ or $\cori$ would be violated.

In practice, different PGMs may be used to model $\pdis$. When an energy-based model (EBM) is adopted, it explicitly learns a Gibbs distribution \citep{lecun2006tutorial}. When a diffusion model (via score matching) is used, it instead learns an implicit representation of the corresponding energy function \citep{song2019generative}. Consequently, treating $\pdis$ as a Gibbs distribution in this section is fully consistent with practical implementations, and is also aligned with the probabilistic adversarial attack formulation in Section~\ref{sec:probattack}.

% discuss the limitation?
% The limitation of the theoretical study in this section is that it provides only a coarse approximation: we rely on the analogy that training $\pdis$ on a more diverse dataset corresponds to effectively decreasing its “temperature.”
}

\subsection{The Distance Between Distributions: An Empirical Study}
\label{sec:empkl}
% In this section, we introduce an empirical way to study the difference in KL divergences between a fixed \emph{victim} distribution, denoted by \(\pvic(\cdot \mid \ytar)\), and two different \emph{distance} distributions, \(\pdis(\cdot \mid \cori^{(1)})\) and \(\pdis(\cdot \mid \cori^{(2)})\). We have the following theorem

The following theorem provides a tractable expression for the difference in KL divergence between a fixed victim distribution and two different distance distributions.
\begin{theorem}
    Let \(\pdis^{\scriptscriptstyle(1)} = \pdis(\cdot \mid \cori^{\scriptscriptstyle(1)})\) and \(\pdis^{\scriptscriptstyle(2)} = \pdis(\cdot \mid \cori^{\scriptscriptstyle(2)})\) be two distance distributions, and let \(\pvic(\cdot \mid \ytar)\) be the victim distribution corresponding to a victim classifier \(p(\ytar \mid x)\). Then, the difference
    \[
    \Delta := KL\bigl(\pdis^{\scriptscriptstyle(1)} \,\|\, \pvic\bigr) \;-\; KL\bigl(\pdis^{\scriptscriptstyle(2)} \,\|\, \pvic\bigr)
    \]
    is given by
    \[
    \Delta = \mathbb{E}_{X \sim \pdis^{(1)}} \bigl[\log \pdis^{\scriptscriptstyle(1)}(X) - c \,\log p(\ytar \mid X)\bigr] -
    \mathbb{E}_{X \sim \pdis^{\scriptscriptstyle(2)}} \bigl[\log \pdis^{\scriptscriptstyle(2)}(X) - c \,\log p(\ytar \mid X)\bigr].
    \]
\end{theorem}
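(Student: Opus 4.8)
The plan is to expand both KL divergences directly from the definition and then exploit the fact that the victim distribution $\pvic$ is shared between the two terms, so that its (generally intractable) normalizing constant cancels in the difference $\Delta$.

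First I would write each divergence in its expectation form,
\[
KL\bigl(\pdis^{(i)} \,\|\, \pvic\bigr) = \mathbb{E}_{X \sim \pdis^{(i)}}\bigl[\log \pdis^{(i)}(X) - \log \pvic(X)\bigr], \qquad i \in \{1,2\}.
\]
Next I would substitute the explicit form of the victim distribution. By Equation~\eqref{eq:padv}, taking $f$ to be the cross-entropy loss $f(X,\ytar) = -\log p(\ytar \mid X)$, we have $\pvic(X \mid \ytar) = Z^{-1}\exp\bigl(c\,\log p(\ytar \mid X)\bigr)$, where $Z = \int \exp\bigl(c\,\log p(\ytar \mid x)\bigr)\,dx$ is the normalizing constant of $\pvic$. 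Taking logarithms gives $\log \pvic(X) = c\,\log p(\ytar \mid X) - \log Z$.

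Substituting this and pulling the constant $\log Z$ out of the expectation yields, for each $i$,
\[
KL\bigl(\pdis^{(i)} \,\|\, \pvic\bigr) = \mathbb{E}_{X \sim \pdis^{(i)}}\bigl[\log \pdis^{(i)}(X) - c\,\log p(\ytar \mid X)\bigr] + \log Z.
\]
Forming the difference $\Delta = KL(\pdisone \,\|\, \pvic) - KL(\pdistwo \,\|\, \pvic)$ and noting that $\pvic$ --- and hence $\log Z$ --- is identical in both terms, the two $\log Z$ contributions cancel, leaving exactly the claimed expression.

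The calculation itself is routine; the real content, and the point I would stress, is precisely this cancellation of $\log Z$. The normalizing constant requires integrating over the entire image space and is therefore intractable, so neither $KL(\pdis^{(i)} \,\|\, \pvic)$ is individually computable. Working with the difference removes this obstruction: every remaining quantity --- the PGM log-likelihoods $\log \pdis^{(i)}(X)$ (estimable via the ELBO, as discussed in the preliminaries) and the classifier log-probabilities $\log p(\ytar \mid X)$ --- is accessible, and the two expectations can be estimated by Monte Carlo sampling from $\pdisone$ and $\pdistwo$. This is what renders $\Delta$ an empirically tractable object, which is the purpose of the theorem.
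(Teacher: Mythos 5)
Your proposal is correct and follows essentially the same route as the paper's proof: expand both KL divergences in expectation form, substitute $\pvic(x\mid\ytar) \propto \exp\bigl(-c\,f(x,\ytar)\bigr)$ with $f(x,\ytar) = -\log p(\ytar\mid x)$, and observe that the shared normalizing constant $\log Z$ cancels in the difference. Your closing remark on why this cancellation is the practically important point (rendering $\Delta$ Monte-Carlo estimable) matches the paper's motivation in Section~\ref{sec:empkl} and Appendix~\ref{app:empkl}.
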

The proof is provided in Appendix \ref{app:thms}. We estimate \(\Delta\) via Monte Carlo integration and further reduce variance by using common random numbers in the sampling process; details of such practical techniques are introduced in Appendix~\ref{app:empkl}. In section~\ref{sec:expkl}, we empirically show that, concept-based adversarial attacks reduce the distance between distributions $\pvic$ and $\pdis$ by showing $\Delta <0$ when $\pdis^{\scriptscriptstyle(2)}$ is a distance distribution around only one image and $\pdis^{\scriptscriptstyle(1)}$ is a distance distribution around a concept.

\section{Generating Concept-based Adversarial Examples}
\label{sec:generating}
In this section we introduce some practical methods to generate concept-based adversarial examples.
\subsection{Augment Concept Datasets by Modern Generative Models}
\label{sec:augment}

In practice, it can be somewhat challenging to obtain a high‐quality, highly diverse dataset $\cori$ depicting the same concept, as required by our method. For example, as shown on the left side of Figure~\ref{fig:augment}, the dataset provided by DreamBooth \citep{ruiz2023dreambooth} contains four images of the same long‐eared corgi. Although the corgi is shown in various poses and from multiple viewpoints, the relatively uniform backgrounds do not provide sufficient diversity for our concept-based adversarial attack. Therefore, we decided to use Stable Diffusion XL \citep{podell2023sdxl} to expand the concept‐description dataset. 

As illustrated in Figure~\ref{fig:augment}, we designate the corgi as ``[V] dog''. Using LoRA finetuning \citep{hu2021lora}, we train an SDXL LoRA model on this concept. Next, we feed the five corgi images into GPT‐4o \citep{hurst2024gpt}, stating that these images represent the ``[V] dog'' and asking it to produce SDXL prompts that embody sufficient diversity for the ``[V] dog.'' Finally, we load the corgi LoRA into SDXL and, guided by GPT‐4o’s prompts (on the top of Figure~\ref{fig:augment}), generate images featuring a wide range of viewpoints, environments, and poses for this corgi concept (see the right side of Figure~\ref{fig:augment}).

\begin{figure}[t]
  \centering
  \input{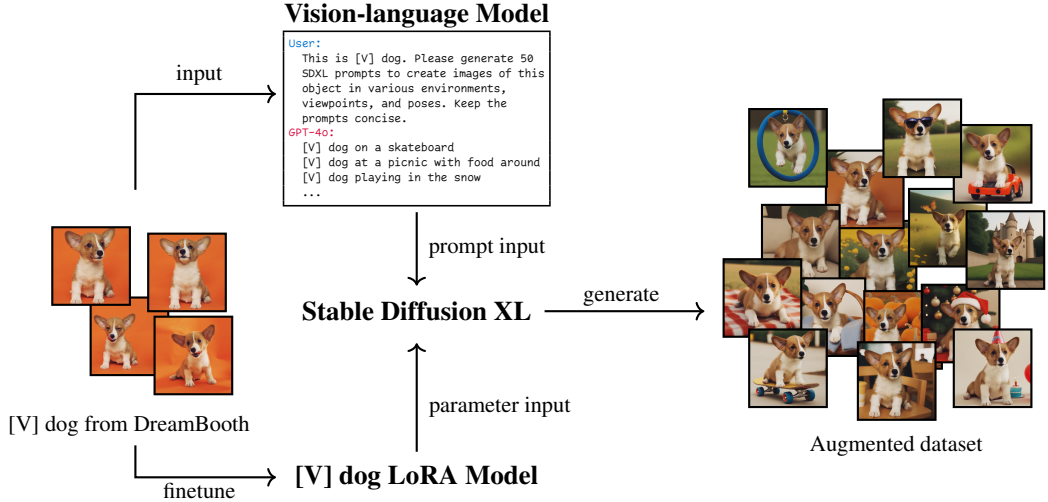}
  \caption{Illustration of how a single corgi concept (``[V] dog'') is expanded into a diverse dataset. DreamBooth images (left) are finetuned with LoRA in Stable Diffusion XL, guided by GPT-4o prompts, to generate various poses, viewpoints, and environments (right). \label{fig:augment}}
\end{figure}

\subsection{Sample Selection}
\label{sec:sampleselection}
A key advantage of probabilistic adversarial attacks is that we can draw multiple samples from $\padv$ and select the best ones\footnote{Although deterministic methods may yield different results when sampled multiple times, their variability does not stem from algorithmic design but rather from other sources of error.}. In a white-box scenario, we can simply discard samples that fail to deceive the classifier (rejection sampling). However, if $\pdis$ and $\pvic$ overlap only slightly, this may lead to high rejection rates (especially under a top-1 success criterion).

As a workaround, we first sample $M$ adversarial examples from $\padv$ and select the best among them. For small batches, it is feasible to manually choose which examples preserve the original concept. However, because we need a large number of adversarial examples, we use an automated approach: we sort the samples by how highly they rank the target class and, in the event of a tie, we employ one of two strategies --- referred to here as the ``conservative strategy'' (CONS) and the ``aggressive strategy'' (AGGR). Under the conservative strategy, we pick the example with the lowest softmax probability, thereby filtering out samples that deviate significantly from the original concept. Under the aggressive strategy, we pick the example with the highest softmax probability, helping us select samples with the greatest adversarial potential.

\section{Experiments}
\label{sec:exp}

\subsection{Data Preparation}
\label{sec:dataprep}

% We choose the DreamBooth \citep{ruiz2023dreambooth} dataset, which contains 30 different objects to represent the concepts. Each object is initially described by 5 to 6 images; these objects may be animals, dolls, or everyday items such as vases or candles. Because each object has distinctive individual characteristics, the DreamBooth dataset is well-suited for representing concepts. We then apply the augmentation method described in Section~\ref{sec:augment} to generate additional images. For each concept, we produce 30 augmented images, excluding the 5-6 original images to ensure more diverse scenes. We refer to this new, fully augmented dataset as DreamBoothPlus. Among the 30 objects of DreamBooth, we only augment 26, excluding four that pose challenges for text generation or require different fine-tuning parameters for cartoon-style content.

We use the DreamBooth dataset \citep{ruiz2023dreambooth}, which provides 30 objects (animals, dolls, and everyday items) each with 5-6 representative images. To increase diversity, we apply the augmentation method in Section~\ref{sec:augment}, generating 30 additional images per concept and forming the DreamBoothPlus dataset. Among the 30 objects of DreamBooth, we only augment 26, excluding four that pose challenges for text generation or require different fine-tuning parameters for cartoon-style content.

\subsection{Fitting the Distance Distributions}
\label{sec:fitdist}
We use the DreamBoothPlus dataset to finetune a diffusion model \citep{dhariwal2021diffusion, nichol2021improved} to fit the distance distribution $\pdis$ (Details in Appendix~\ref{app:finetune}). We choose this model over more advanced architectures, such as the Stable Diffusion series \citep{podell2023sdxl, rombach2022high} or Flux, because it directly models $p(x)$ instead of $p(x \mid y)$, where $x$ is the image and $y$ is a label or prompt. Our goal is to employ a more principled model to illustrate our general adversarial attack method, rather than to optimize for the highest possible engineering performance.

\subsection{Calculating the Difference between KL Divergences}
\label{sec:expkl}
We empirically estimate the difference between two KL divergences,
\[
\Delta := KL(\pdisone \,\|\, \pvic) \;-\; KL(\pdistwo \,\|\, \pvic),
\]
by using the Monte-Carlo method and the practical techniques introduced in Section \ref{sec:empkl} and Appendix \ref{app:empkl}, and we denote this estimate by $\tilde{\Delta}$. Concretely, for each concept in DreamBoothPlus, we fine-tune a diffusion model on the entire concept to obtain $\pdisone$. We then fine-tune a separate diffusion model on just one image to obtain $\pdistwo$. Next, we calculate the empirical difference $\tilde{\Delta}$ and find that $\tilde{\Delta} < 0$ for every concept. This strongly suggests that $\Delta < 0$, confirming our hypothesis from Section \ref{sec:empkl}. The table in Appendix~\ref{app:empkl} summarizes the values of $\tilde{\Delta}$ for each concept.

\subsection{Generating Targeted Adversarial Examples}
\label{sec:expae}
We evaluate the performance of the concept-based adversarial attack in a targeted adversarial attack setting, because targeted attacks are generally more difficult than untargeted attacks\footnote{This is especially true for ImageNet classifiers, which must distinguish among 1,000 classes. In an untargeted attack, the goal is simply to prevent the victim classifier from assigning the adversarial sample to its correct class. In contrast, a targeted attack requires the classifier to misclassify the adversarial sample exactly as the chosen target class $\ytar$.}. In our experiments, we compare NCF \citep{yuan2022natural}, ACA \citep{chen2024content}, DiffAttack \citep{diffatk}, and ProbAttack \citep{zhang2024constructing}. We include NCF because it is the strongest color-based adversarial attack. Both ACA and DiffAttack apply adversarial gradients in the latent space induced by Stable Diffusion and DDIM, representing the state of the art in unrestricted adversarial attacks. As discussed earlier in Section \ref{sec:conceptadv}, ProbAttack can be viewed as a special case of our approach when $|\cori|=1$. For both ProbAttack and our approach, we set the number of samples $M$ to 10. During the sample selection phase of the experiments, our method uses two strategies --- a conservative strategy and an aggressive strategy (both described in Section \ref{sec:sampleselection}) --- denoted in the tables as OURS (CONS) and OURS (AGGR), respectively.

For each compared method, we conduct a white-box attack on the victim classifier (also referred to as the surrogate classifier) by generating adversarial samples based on it. Next, we feed these white-box-generated adversarial samples into other classifiers --- a process known as a black-box attack. If these additional classifiers also classify the adversarial samples into the target class, the black-box attack is deemed successful, indicating transferability.

In our experiments, ResNet50 \citep{he2016deep} is used as the victim classifier for white-box attacks. We measure transferability on VGG19 \citep{simonyan2014very}, ResNet152 \citep{he2016deep}, DenseNet161 \citep{huang2017densely}, Inception V3 \citep{szegedy2016rethinking}, EfficientNet B7 \citep{tan2019efficientnet}, and on adversarially trained Inception V3 Adv \citep{kurakin2016adversarial}, EfficientNet B7 Adv \citep{xie2020adversarial}, and Ensemble IncRes V2 \citep{tramer2017ensemble}. We report both the white-box targeted attack success rate (on ResNet50) and the black-box transfer success rate (on the remaining models).

\begin{table}[ht]
\scriptsize
  \caption{Targeted attack success rates (\%) on ImageNet classifiers. In the white-box setting, a targeted attack is counted as successful if the target class is ranked first. For transferability, we report top-5 success rates, counting an attack as successful if the target class is among the top 5 predictions (since top-1 success was uniformly low across all methods). See Appendix~\ref{app:fullres} for full results.}
  \label{table:imagenet}
  \begin{center}
  \begin{tabular}{l|cccccc}
          \toprule
           & NCF & ACA & DiffAttack & ProbAttack & OURS (CONS) & OURS (AGGR) \\
           \midrule
           \textbf{White-box} & \multicolumn{6}{c}{\textbf{Targeted-Top1}} \\
           \midrule
           ResNet 50 & 1.15 & 6.03 & 84.23 & 59.23 & \textbf{97.82} & \textbf{97.82} \\
           \midrule 
           \textbf{Transferability} & \multicolumn{6}{c}{\textbf{Targeted-Top5}} \\
           \midrule
VGG19            &  1.28 &  1.67 &  \textbf{4.36} &  2.44 &  2.05 & \textbf{4.36} \\
ResNet 152       &  1.41 &  1.92 &  8.33 &  3.33 &  2.82 & \textbf{8.72} \\
DenseNet 161     &  1.41 &  2.05 &  7.44 &  3.97 &  3.85 & \textbf{11.54} \\
Inception V3     &  0.90 &  1.41 &  3.08 &  2.56 &  1.28 & \textbf{4.74} \\
EfficientNet B7  &  1.41 &  1.67 &  1.79 &  1.41 &  1.28 & \textbf{3.97} \\
           \midrule 
           \textbf{Adversarial Defence} & \multicolumn{6}{c}{\textbf{}} \\
           \midrule
Inception V3 Adv    &  1.15 &  1.28 &  3.21 &  2.18 &  0.90 & \textbf{3.72}\\
EfficientNet B7 Adv &  0.26 &  1.15 &  2.05 &  2.31 &  1.67 & \textbf{6.41}\\
Ensemble IncRes V2  &  0.77 &  1.28 &  2.69 &  1.92 &  0.77 & \textbf{5.00} \\
          \bottomrule
        \end{tabular}
  \end{center}
\end{table}

For unrestricted adversarial examples, we must also check whether they preserve the original concept and remain undetectable to humans. Therefore, we measure similarity via a user study (Appendix~\ref{app:userstudy}) and CLIP \citep{radford2021learning}, and image quality using no-reference metrics (MUSIQ \citep{ke2021musiq}, TReS \citep{golestaneh2022no}, NIMA \citep{talebi2018nima}, ARNIQA \citep{agnolucci2024arniqa}, DBCNN \citep{zhang2020blind}, and HyperIQA \citep{su2020blindly}).

As ImageNet has 1,000 classes, it is impractical to evaluate them all. Therefore, we randomly select 30 target classes $\ytar$, listed in Appendix \ref{app:listtarget}. Since DreamBoothPlus contains 26 concepts, each method generates $26 \times 30 = 780$ adversarial examples. This scale is comparable to current popular approaches performing untargeted adversarial attacks on the ImageNet-Compatible dataset \citep{kurakin2018adversarial}.

Table~\ref{table:imagenet} presents the targeted attack success rates on ImageNet classifiers. Since the choice of aggressive or conservative strategy in the sample selection phase does not affect white-box performance, those rates are identical. Notably, the aggressive strategy achieves significantly higher transferability than other methods. While the conservative strategy leads to slightly lower transferability, it is still roughly comparable to the baseline methods. Please refer to Appendix~\ref{app:fullres} for full results.

Table~\ref{table:imgquality} reports the similarity between each adversarial sample and its original image, as well as the image quality of the generated examples. Both our aggressive and conservative strategies outperform the other methods in these metrics. Combined with the attack success rates in Table 1, our approach not only achieves higher success but also preserves the original concept $\cori$ more effectively. Figure~\ref{fig:qual} provides a qualitative comparison, showing how well our method maintains the original concept. Notably, DiffAttack generates images missing details, which aligns with its weaker image quality scores. For additional qualitative analysis, please refer to Appendix~\ref{app:morequal}.

\begin{table}[ht]
\scriptsize
  \caption{Quantitative comparison of similarity to the original images and no reference image quality metrics for unrestricted adversarial examples.}
  \label{table:imgquality}
  \begin{center}
  \begin{tabular}{l|ccccccc}
          \toprule
           & Clean & NCF & ACA & DiffAttack & ProbAttack & OURS (CONS) & OURS (AGGR) \\
           \midrule
\textbf{Similarity} & \\
$\uparrow$ User Study & N/A & 0.1859 & 0.2808 & 0.7577 & 0.8041 & \textbf{0.9654} & 0.8808 \\
$\uparrow$ Avg. Clip Score & 1.0 & \textbf{0.8728} & 0.7861 & 0.8093 & 0.8581 & 0.8283 & 0.8043 \\
\midrule
\textbf{Image Quality} & \\
$\uparrow$ HyperIQA & 0.7255 & 0.5075 & 0.6462 & 0.5551 & 0.6675 & \textbf{0.6947} 
 & 0.6809\\
$\uparrow$ DBCNN & 0.6956 & 0.5096 & 0.6103 & 0.5294 & 0.6161 & \textbf{0.6572} & 0.6399 \\
$\uparrow$ ARNIQA & 0.7667 & 0.5978 & 0.6879 & 0.6909 & 0.7009 & \textbf{0.7335} & 0.7154 \\
$\uparrow$ MUSIQ-AVA & 4.3760 & 3.8135 & 4.2687 & 4.0734 & 4.3130 & \textbf{4.5305} & 4.5250\\
$\uparrow$ NIMA-AVA & 4.5595 & 3.7916 & 4.4511 & 4.0589 & 4.5168 & \textbf{4.7575} & 4.7401\\
$\uparrow$ MUSIQ-KonIQ & 65.0549 & 50.5022 & 59.0840 & 52.5399 & 58.1563 & \textbf{63.7486} & 62.2217 \\
$\uparrow$ TReS & 93.2127 & 64.7050 & 85.8435 & 74.1167 & 84.3131 & \textbf{90.4488} & 88.0836 \\
          \bottomrule
        \end{tabular}
  \end{center}
\end{table}

\begin{figure}[t]
  \centering
  \input{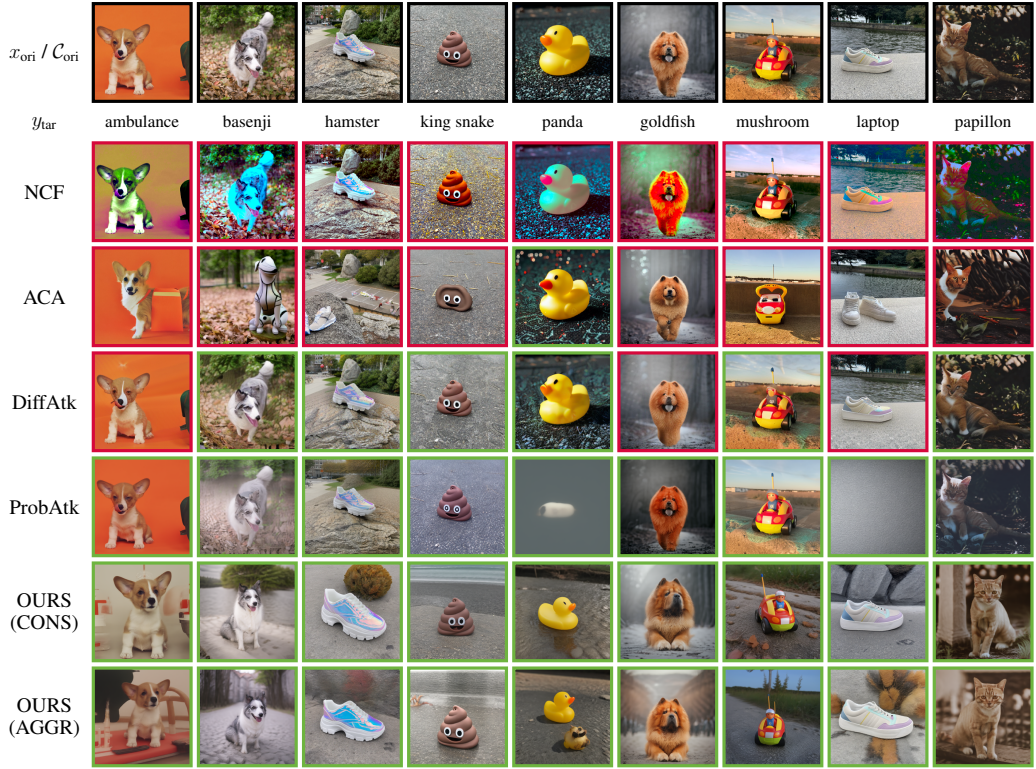}
  \caption{Qualitative comparison. (A \textbf{\color{mygreen}green} border indicates an example that successfully fools the classifier; \textbf{\color{myred}red} indicates failure.) See Appendix~\ref{app:morequal} for a more detailed qualitative analysis. \label{fig:qual}}
\end{figure}

\section{Related Works}
To the best of our knowledge, no {\color{reviseblue}existing method constructs adversarial examples conditioned on an identity-level concept}. The most closely related works are \citet{song2018constructing}'s, {\color{reviseblue}\citet{dai2024advdiff}'s} and \citet{collins2025natadiff}'s. However, they treats a ``class'' (e.g., cat, dog, or truck) as the concept, which cannot precisely capture an individual identity. In our {\color{reviseblue}framework, we represent a concept through a distribution that could be learned from a set of images, allowing it to flexibly correspond to a single image (i.e.~a set with size $1$), an identity-level concept, or a class-level concept}. By contrast, other unrestricted adversarial attack methods focus solely on generating adversarial examples from a single image{\color{reviseblue}: \citet{xiao2018generating} generate adversarial examples using GANs, \citet{chen2023advdiffuser} employ diffusion models, and \citet{laidlaw2020perceptual} impose a feature-space distance as a regularization term in the optimization objective. ACA \citep{chen2024content} and DiffAttack \citep{diffatk} further apply the attack gradient directly in the DDIM latent space \citep{song2020denoising}.} Color-based transformations have proven effective in preserving semantic content for untargeted attacks \citep{bhattad2019unrestricted, hosseini2018semantic, shamsabadi2020colorfool, yuan2022natural, zhao2020adversarial}, yet they perform poorly in targeted scenarios \citep{diffatk}, a result confirmed by our experiments.

\begin{table}[h]
\scriptsize
\centering
\color{reviseblue}
\begin{tabular}{lccc}
\toprule
\textbf{Unrestricted Adversarial Attack Method} & \textbf{Single-image} & \textbf{Identity-level Concept} & \textbf{Class-level Concept} \\
\midrule
\citet{bhattad2019unrestricted} & Yes & No & No \\
\citet{hosseini2018semantic}  & Yes & No & No \\
Colorfool \citep{shamsabadi2020colorfool} & Yes & No & No \\
\citet{zhao2020adversarial}  & Yes & No & No \\
NCF \citep{yuan2022natural} & Yes & No & No \\
ACA \citep{chen2024content} & Yes & No & No \\
DiffAttack \citep{diffatk} & Yes & No & No \\
ProbAttack \citep{zhang2024constructing} & Yes & No & No \\
AdvGAN \citep{xiao2018generating} & Yes & No & No \\
\citet{xiao2018spatially} & Yes & No & No \\
Perceptual Adv.\ Attack \citep{laidlaw2020perceptual} & Yes & No & No \\
\citet{song2018constructing} & No & No & Yes \\
NatADiff \citep{collins2025natadiff} & No & No & Yes \\
AdvDiff \citep{dai2024advdiff} & No & No & Yes \\
AdvDiffuser \citet{chen2023advdiffuser} & Yes & No & Yes \\
\midrule
\textbf{Ours: Concept-based Adv.\ Attack} & Yes & Yes & Yes \\
\bottomrule
\end{tabular}
\caption{Comparison of unrestricted adversarial attack methods. Our method is the only one capable of performing adversarial attacks at the identity-level concept, while also supporting single-image and class-level concepts.}
\label{tab:methodcomparison}
\end{table}

Our work directly inherited from \citet{zhang2024constructing}'s probabilistic perspective, but we make a novel contribution by, for the first time, defining the distance distribution in adversarial attacks with respect to a {\color{reviseblue} distribution representing} a concept rather than a single image. 
%To the best of our knowledge, this is the first attempt in the adversarial attack literature to ground the notion of distance at the concept level. 
Although, operationally, our method appears to be a straightforward extension --- replacing the single-image-centered distribution $p_{\text{dis}}(\cdot \mid x_{\text{ori}})$ with a concept-centered distribution $p_{\text{dis}}(\cdot \mid C_{\text{ori}})$ (as in the difference between (2) and (3)) --- we rigorously demonstrate, both theoretically and empirically, why this seemingly simple generalization is remarkably effective.

\section{Conclusions}

The essence of adversarial attacks is to create examples that are imperceptible to humans yet harmful to computational systems. Our work demonstrates that in an era of powerful generative models, creating an adversarial example from scratch --- one that humans perceive as conceptually correct --- can be more flexible, more realistic, and ultimately more potent than simply perturbing a single image. Leveraging modern generative models, adversarial noise can be concealed in subtle changes to viewpoint, pose, or background, making it exceedingly difficult to detect. We believe that our concept-based adversarial attack heralds the future of adversarial attacks, posing new challenges to the field of AI security. Defending against such threats will be crucial for advancing AI security research.

\newpage

\section*{Acknowledgments}
% AZ and SK was funded by the UKRI AI Hub in Generative Models (EP/Y028805/1). SM was funded by the UKRI AI programme grant (EP/Y028856/1). AZ was also supported by a personal grant from Mrs. Yanshu Wu. SK was also supported by the Research Council of Finland (Flagship programme: Finnish Center for Artificial Intelligence FCAI, 359207) and the UKRI Turing AI World-Leading Researcher Fellowship (EP/W002973/1).

This work was supported by the Engineering and Physical Sciences Research Council (EPSRC) through the AI Hub in Generative Models (EP/Y028805/1).

AZ and SK were funded by the UKRI AI Hub in Generative Models (EP/Y028805/1). SM was supported by the UKRI AI Programme Grant (EP/Y028856/1). AZ also acknowledges support from a personal grant provided by Mrs. Yanshu Wu. SK was additionally supported by the Research Council of Finland (Flagship Programme: Finnish Center for Artificial Intelligence FCAI, 359207) and the UKRI Turing AI World-Leading Researcher Fellowship (EP/W002973/1).

\section*{Ethics Statement}

This work introduces a new class of adversarial attacks that operate at the concept level. While our primary goal is to advance scientific understanding of adversarial robustness and stimulate the development of stronger defenses, we acknowledge the potential for malicious misuse. In particular, concept-based adversarial attacks could be exploited to evade security-sensitive image classifiers or to manipulate systems deployed in safety-critical applications.  

To mitigate these risks, we have:  
\begin{itemize}
    \item Released all code and data strictly for research purposes, under licenses that encourage responsible use.  
    \item Discussed mitigation strategies in Appendix~\ref{app:socialeffect}, including adversarial training, AI-generated content detection, and hybrid defenses.  
\end{itemize}

We emphasize that the broader impact (Appendix~\ref{app:broaderimpacts}) of this work depends on the research community’s response. By exposing vulnerabilities of current classifiers, we aim to encourage the development of more robust and trustworthy AI systems. We strongly discourage any use of this research for harmful purposes.

\section*{Reproducibility Statement}

We have taken several steps to ensure the reproducibility of our work:  

\begin{itemize}
    \item \textbf{Code and models}: We provide the full source code, including scripts for dataset preparation, model fine-tuning, and adversarial example generation, at \url{https://github.com/andiac/ConceptAdv}. All hyperparameters and training details are specified in the code repository.  
    \item \textbf{Datasets}: Our experiments are based on the DreamBooth dataset \citep{ruiz2023dreambooth}, which is publicly available under a CC-BY-4.0 license. We also describe our augmentation procedure using SDXL and LoRA in Section~\ref{sec:augment}, and provide scripts for generating data (DreamBoothPlus) as part of our code repo.  
    \item \textbf{Hyperparameters}: Fine-tuning settings for both SDXL LoRA and diffusion models are detailed in Appendix~\ref{app:finetune}. For sampling and evaluation, we report the number of generated adversarial examples, sampling strategies, and evaluation metrics in Sections~\ref{sec:generating}-\ref{sec:exp}.  
    \item \textbf{Theoretical results}: Proofs of all theorems are included in Appendix~\ref{app:thms}, and additional details on KL divergence estimation are in Appendix~\ref{app:empkl}.  
    \item \textbf{Compute resources}: We report hardware specifications and training times in Appendix~\ref{app:compresource} to allow others to reproduce our experiments with similar resources.  
\end{itemize}

We believe these resources provide sufficient detail for reproducing both our theoretical and empirical results.

\bibliography{iclr2026_conference}
\bibliographystyle{iclr2026_conference}

\newpage

{\LARGE\sc Appendix \par}

\appendix
\setcounter{theorem}{0}

\section{Proof of the Theorems}
\label{app:thms}
% Technical appendices with additional results, figures, graphs and proofs may be submitted with the paper submission before the full submission deadline (see above), or as a separate PDF in the ZIP file below before the supplementary material deadline. There is no page limit for the technical appendices.
{\color{reviseblue}
\begin{theorem}
    Let $p$ be a probability distribution and $q$ be a Gibbs distribution of the form \[q(x)=\frac{\exp(-\beta D(x,\mu))}{Z(\beta)},\]where $Z(\beta)$ is the normalizing constant, $\mu$ is a constant and $D$ is a distance function. Then $KL(p\,\|\,q)$ is a increasing function of $\beta$ whenever $\mathbb{E}_{X\sim p} [D(X, \mu)] > \mathbb{E}_{X\sim q}[D(X,\mu)]$.
\end{theorem}
\begin{proof}
    According to the definition of KL divergence, we have
    \[KL(p||q) = \int p(x)\log \frac{p(x)}{q(x)}dx = \int p(x) \log p(x) dx - \int p(x) \log q(x)dx\]
    Since $\int p(x)\log p(x)dx$ is independent of $\beta$, we can treat it as a constant. Let us denote the $\beta$-dependent component as $f(\beta)$, which gives us 
    \begin{align*}
        f(\beta)&=-\int p(x)\log q(x)dx\\
        &= -\int p(x)\left[-\beta D(x,\mu) - \log Z(\beta)  \right]dx\\
        &= \int p(x) \beta D(x,\mu) dx + \log Z(\beta)\int p(x)dx\\
        &= \mathbb{E}_{X\sim p}[\beta D(X,\mu)] + \log Z(\beta)
    \end{align*}
    Taking the derivative with respect to $\beta$, we obtain
    \begin{align*}
        \frac{d}{d\beta}f(\beta) &= \mathbb{E}_{X\sim p}[D(X,\mu)] + \frac{1}{Z(\beta)} \frac{dZ(\beta)}{d \beta} \\
        &= \mathbb{E}_{X\sim p}[D(X,\mu)] + \frac{1}{Z(\beta)} \int \frac{d \exp (-\beta D(X,\mu))}{d\beta}dx\\
        &= \mathbb{E}_{X\sim p}[D(X,\mu)] + \frac{1}{Z(\beta)} \int -D(X,\mu)\exp (-\beta D(X,\mu))dx\\
        &=\mathbb{E}_{X\sim p}[D(X,\mu)] - \mathbb{E}_{X\sim q}[D(X,\mu)]
    \end{align*}
    Therefore, when $\mathbb{E}_{X\sim p} [D(X, \mu)] > \mathbb{E}_{X\sim q}[D(X,\mu)]$, the derivative becomes positive. This implies that both $f(\beta)$ and consequently $KL(p||q)$ increase as $\beta$ increases.
\end{proof}
}

% \begin{theorem}
%     Let $p$ be a probability distribution and $q$ be a Gaussian distribution with mean $\mu$ and variance $\sigma^2$. Then $KL(p||q)$ is a decreasing function of $\sigma^2$ when $\sigma^2 < \mathbb{E}_{X\sim p}[(X-\mu)^2]$.
% \end{theorem}
% \begin{proof}
%     According to the definition of KL divergence, we have
%     \[KL(p||q) = \int p(x)\log \frac{p(x)}{q(x)}dx = \int p(x) \log p(x) dx - \int p(x) \log q(x)dx\]
%     Since $\int p(x)\log p(x)dx$ is independent of $\sigma^2$, we can treat it as a constant. Let us denote the $\sigma^2$-dependent component as $f(\sigma^2)$, which gives us
%     \begin{align*}
%         f(\sigma^2)&=-\int p(x)\log q(x)dx\\
%         &= -\int p(x)\left[-\frac{1}{2}\log (2\pi\sigma^2) - \frac{(x-\mu)^2}{2\sigma^2}  \right]dx\\
%         &= \mathbb{E}_{X\sim p}\left[\frac{1}{2}\log (2\pi\sigma^2) + \frac{(X-\mu)^2}{2\sigma^2}  \right]\\
%         &= \frac{1}{2}\log (2\pi\sigma^2) + \frac{\mathbb{E}_{X\sim p}\left[(X-\mu)^2\right]}{2\sigma^2}
%     \end{align*}
%     Taking the derivative with respect to $\sigma^2$, we obtain
%     \begin{align*}
%         \frac{d}{d(\sigma^2)}f(\sigma^2) &= \frac{1}{2\sigma^2} - \frac{\mathbb{E}_{X\sim p}\left[(X-\mu)^2\right]}{2(\sigma^2)^2}\\
%         &=\frac{1}{2(\sigma^2)^2}\left(\sigma^2 - \mathbb{E}_{X\sim p}\left[(X-\mu)^2\right]\right)
%     \end{align*}
%     Therefore, when $\sigma^2 < \mathbb{E}_{X\sim p}\left[(X-\mu)^2\right]$, the derivative becomes negative. This implies that both $f(\sigma^2)$ and consequently $KL(p||q)$ decrease as $\sigma^2$ increases.
% \end{proof}

\begin{theorem}
    Let \(\pdis^{(1)} = \pdis(\cdot \mid \cori^{(1)})\) and \(\pdis^{(2)} = \pdis(\cdot \mid \cori^{(2)})\) be two distance distributions, and let \(\pvic(\cdot \mid \ytar)\) be the victim distribution corresponding to a victim classifier \(p(\ytar \mid x)\). Then, the difference
    \[
    \Delta = KL\bigl(\pdis^{(1)} \,\|\, \pvic\bigr) \;-\; KL\bigl(\pdis^{(2)} \,\|\, \pvic\bigr)
    \]
    is given by
    \[
    \Delta = \mathbb{E}_{X \sim \pdis^{(1)}} \bigl[\log \pdis^{(1)}(X) - c \,\log p(\ytar \mid X)\bigr] -
    \mathbb{E}_{X \sim \pdis^{(2)}} \bigl[\log \pdis^{(2)}(X) - c \,\log p(\ytar \mid X)\bigr].
    \]
\end{theorem}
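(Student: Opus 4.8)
The plan is to expand both KL divergences directly from their definition and exploit the fact that they are taken against the \emph{same} victim distribution $\pvic$, so that its intractable normalising constant cancels in the difference. First I would recall the form of $\pvic$ from Equation~\eqref{eq:padv}: since $\pvic(x \mid \ytar) \propto \exp\bigl(-c\, f(x, \ytar)\bigr)$ and $f$ is the cross-entropy loss $f(x,\ytar) = -\log p(\ytar \mid x)$, we may write
\[
\pvic(x \mid \ytar) = \frac{1}{Z}\, p(\ytar \mid x)^{c},
\qquad
Z = \int p(\ytar \mid x')^{c}\, dx',
\]
so that $\log \pvic(x \mid \ytar) = c \log p(\ytar \mid x) - \log Z$, where the partition function $Z$ depends only on $\pvic$ and $\ytar$, not on the distance distribution under consideration.

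Next I would write out each KL term using its definition. For the first,
\[
KL\bigl(\pdis^{(1)} \,\|\, \pvic\bigr)
= \mathbb{E}_{X \sim \pdis^{(1)}}\!\bigl[\log \pdis^{(1)}(X) - \log \pvic(X \mid \ytar)\bigr]
= \mathbb{E}_{X \sim \pdis^{(1)}}\!\bigl[\log \pdis^{(1)}(X) - c \log p(\ytar \mid X)\bigr] + \log Z,
\]
and identically for $\pdis^{(2)}$, producing the \emph{same} additive constant $\log Z$. Subtracting the two expressions then yields the claimed formula, since the $\log Z$ terms cancel exactly.

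The key step, and indeed the entire point of stating the result, is precisely this cancellation. The normalising constant $Z$ of $\pvic$ is intractable, so neither KL divergence is individually computable; yet their difference $\Delta$ is expressible purely in terms of quantities we can both sample and evaluate, namely the densities of the fitted distance distributions $\pdis^{(1)}, \pdis^{(2)}$ and the classifier log-probabilities $\log p(\ytar \mid X)$. This is exactly what makes the Monte-Carlo estimator $\tilde{\Delta}$ of Section~\ref{sec:expkl} well defined. I expect no genuine analytic obstacle here; the only subtlety worth stating explicitly is that the cancellation relies on both KL divergences being taken against the same fixed victim distribution, which is guaranteed in the white-box setting where $\pvic$ is held constant.
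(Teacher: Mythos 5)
Your proof is correct and follows essentially the same route as the paper's: expand each KL divergence via its definition, write $\log \pvic(X\mid\ytar) = c\log p(\ytar\mid X) - \log Z$ using the cross-entropy form of $f$, and observe that the partition-function term $\log Z$ cancels in the difference. Your added remark that this cancellation is what makes the Monte-Carlo estimator tractable matches the paper's motivation as well.
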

% \begin{proof}
% According to the definition of KL divergence, we have
% \begin{align*}
%     KL(\pdis(\cdot\mid\cori) || \pvic(\cdot\mid\ytar)) = \mathbb{E}_{X\sim \pdis(\cdot\mid\cori)}[\log\pdis(X\mid\cori) - \log\pvic(X\mid\ytar)]
% \end{align*}
% As we already know $\pvic(x\mid \ytar) \propto \exp(-c\,f(x,\ytar))$, then 
% \[\pvic(x\mid \ytar) = \frac{\exp(-c\,f(x,\ytar))}{\int \exp(-c\,f(x,\ytar)) dx}.\] 
% As we known that the cross-entropy loss $f(x,\ytar) = -\log p (\ytar\mid x)$, we have
% \begin{align*}
%     KL(\pdis || \pvic) = \mathbb{E}_{X\sim \pdis}[\log\pdis(X) - \exp (-c (-\log p(\ytar\mid X)))] + \int \exp(-c\,f(x,\ytar)) dx
% \end{align*}
% Then the difference between two KL divergences is
% \[\Delta = \mathbb{E}_{X \sim \pdis^{(1)}} \bigl[\log \pdis^{(1)}(X) - c \,\log p(\ytar \mid X)\bigr] -
%     \mathbb{E}_{X \sim \pdis^{(2)}} \bigl[\log \pdis^{(2)}(X) - c \,\log p(\ytar \mid X)\bigr].\]
% \end{proof}

\begin{proof}
Recall that for distributions \(p\) and \(q\) on the same space, the Kullback--Leibler (KL) divergence is
\[
    KL(p \,\|\, q) 
    \;=\; \mathbb{E}_{X \sim p} \bigl[\log p(X) \;-\; \log q(X)\bigr].
\]
Hence, for \(\pdis(\cdot \mid \cori)\) and \(\pvic(\cdot \mid \ytar)\),
\[
    KL\bigl(\pdis(\cdot\mid\cori) \,\|\, \pvic(\cdot\mid\ytar)\bigr) 
    \;=\; \mathbb{E}_{X\sim \pdis(\cdot\mid\cori)} 
          \Bigl[\log \pdis(X\mid \cori) \;-\; \log \pvic(X\mid \ytar)\Bigr].
\]
Given that the victim distribution is proportional to
\[
    \pvic(x\mid \ytar) \;\propto\; \exp\!\bigl(-c\,f(x,\ytar)\bigr),
\]
where \(f(x,\ytar)\) is the cross-entropy loss, i.e.\ \(f(x,\ytar) = -\log p(\ytar \mid x)\).  Thus we can write
\[
    \pvic(x\mid \ytar) 
    \;=\; \frac{\exp(-c\,f(x,\ytar))}{\int \exp(-c\,f(x,\ytar))\,dx}.
\]
Let \(Z := \int \exp\!\bigl(-c\,f(x,\ytar)\bigr)\,dx\).  Then
\[
    \log \pvic(X\mid \ytar) 
    \;=\; \log \exp\bigl(-c\,f(X,\ytar)\bigr) \;-\; \log Z
    \;=\; -c\,f(X,\ytar) \;-\; \log Z.
\]
Therefore,
\[
    KL\bigl(\pdis(\cdot\mid\cori)\,\|\,\pvic(\cdot\mid\ytar)\bigr) 
    \;=\; \mathbb{E}_{X\sim \pdis(\cdot\mid\cori)}
          \bigl[\log \pdis(X\mid \cori) 
          \;+\; c\,f(X,\ytar)\bigr] 
          \;+\; \log Z.
\]
Since \(f(x,\ytar) = -\log p(\ytar \mid x)\), we get
\[
    c \,\mathbb{E}_{X\sim \pdis(\cdot\mid\cori)}[f(X,\ytar)]
    \;=\; -c \,\mathbb{E}_{X\sim \pdis(\cdot\mid\cori)}
               [\log p(\ytar \mid X)].
\]
Hence
\[
    KL\bigl(\pdis(\cdot\mid\cori)\,\|\,\pvic(\cdot\mid\ytar)\bigr) 
    \;=\; \mathbb{E}_{X\sim \pdis(\cdot\mid\cori)}
          \bigl[\log \pdis(X\mid \cori) 
          \;-\; c \,\log p(\ytar \mid X)\bigr] 
          \;+\; \log Z.
\]
Now take two such distributions, \(\pdis^{(1)}\) and \(\pdis^{(2)}\).  Because \(\log Z\) does not depend on which \(\pdis(\cdot\mid\cori)\) we use, it cancels when we form the difference:
\begin{align*}
    \Delta 
    =& KL\bigl(\pdis^{(1)} \,\|\, \pvic\bigr) \;-\; KL\bigl(\pdis^{(2)} \,\|\, \pvic\bigr) \\
    =& \Bigl(\mathbb{E}_{X \sim \pdis^{(1)}} 
             \bigl[\log \pdis^{(1)}(X) - c \log p(\ytar \mid X)\bigr] 
             + \log Z\Bigr)\\
       &- \Bigl(\mathbb{E}_{X \sim \pdis^{(2)}} 
                   \bigl[\log \pdis^{(2)}(X) - c \log p(\ytar \mid X)\bigr] 
                   + \log Z\Bigr) \\
    =& \mathbb{E}_{X \sim \pdis^{(1)}} 
       \bigl[\log \pdis^{(1)}(X) \;-\; c \,\log p(\ytar \mid X)\bigr] 
       \;-\; \mathbb{E}_{X \sim \pdis^{(2)}} 
       \bigl[\log \pdis^{(2)}(X) \;-\; c \,\log p(\ytar \mid X)\bigr],
\end{align*}
which is precisely the claimed result.
\end{proof}

\section{Practical Strategies for KL Divergence Estimation}
\label{app:empkl}

\subsection{Common Random Numbers}
In practice, our distance distributions \(\pdis^{\scriptscriptstyle(1)}\) and \(\pdis^{\scriptscriptstyle(2)}\) are instantiated by diffusion models: each image \(X\) is generated from noise \(\epsilon\) via a generator \(\mathcal{G}\). We write \(X = \mathcal{G}^{\scriptscriptstyle(1)}(\epsilon)\) to indicate that \(X\) is sampled from \(\pdis^{\scriptscriptstyle(1)}\), and \(X = \mathcal{G}^{\scriptscriptstyle(2)}(\epsilon)\) to indicate that \(X\) is sampled from \(\pdis^{\scriptscriptstyle(2)}\). In this common-noise setup, the difference in KL divergences becomes
\begin{equation}
\label{eq:crn}
    \Delta = \mathbb{E}_{\epsilon} \Bigl[\log \pdis^{\scriptscriptstyle(1)}(\mathcal{G}^{\scriptscriptstyle(1)}(\epsilon)) - c \log p(\ytar \mid \mathcal{G}^{\scriptscriptstyle(1)}(\epsilon)) -
    \log \pdis^{\scriptscriptstyle(2)}(\mathcal{G}^{\scriptscriptstyle(2)}(\epsilon)) + c \log p(\ytar \mid \mathcal{G}^{\scriptscriptstyle(2)}(\epsilon))\Bigr].
\end{equation}

\subsection{Likelihood Correction}
We posit a probabilistic model $\pshare$ that captures the non-semantic, shared features of the images. Specifically, we assume that for samples $X$ drawn from $\pdisone$ and $\pdistwo$, the expected values $\mathbb{E}_{X \sim \pdisone}[\log \pshare(X)]$ and $\mathbb{E}_{X \sim \pdistwo}[\log \pshare(X)]$ are equal. This assumption is reasonable because the two distributions, in principle, can generate images sharing the same non-semantic details, differing only in their semantic content.

Then, under our diffusion-model instantiation, we have
\[
\mathbb{E}_{X \sim \pdisone}[\log \pshare(X)] = \mathbb{E}_{X \sim \pdistwo}[\log\pshare(X)] \Rightarrow \mathbb{E}_\epsilon \left[\log\pshare(\mathcal{G}^{\scriptscriptstyle(1)}(\epsilon)) - \log\pshare(\mathcal{G}^{\scriptscriptstyle(2)}(\epsilon))\right]=0
\]

for $\Delta$, following \eqref{eq:crn}, we have the equations:
\begin{align*}
\Delta =& \mathbb{E}_{\epsilon} \Bigl[\log \pdis^{\scriptscriptstyle(1)}(\mathcal{G}^{\scriptscriptstyle(1)}(\epsilon)) - c \log p(\ytar \mid \mathcal{G}^{\scriptscriptstyle(1)}(\epsilon)) -
    \log \pdis^{\scriptscriptstyle(2)}(\mathcal{G}^{\scriptscriptstyle(2)}(\epsilon)) + c \log p(\ytar \mid \mathcal{G}^{\scriptscriptstyle(2)}(\epsilon))\Bigr]\\
    &- \mathbb{E}_\epsilon \left[\log\pshare(\mathcal{G}^{\scriptscriptstyle(1)}(\epsilon)) - \log\pshare(\mathcal{G}^{\scriptscriptstyle(2)}(\epsilon))\right]\\
    =& \mathbb{E}_{\epsilon} \Bigl[\log \pdis^{\scriptscriptstyle(1)}(\mathcal{G}^{\scriptscriptstyle(1)}(\epsilon)) - \log\pshare(\mathcal{G}^{\scriptscriptstyle(1)}(\epsilon)) - c \log p(\ytar \mid \mathcal{G}^{\scriptscriptstyle(1)}(\epsilon))\\
    &- \log \pdis^{\scriptscriptstyle(2)}(\mathcal{G}^{\scriptscriptstyle(2)}(\epsilon)) + \log\pshare(\mathcal{G}^{\scriptscriptstyle(2)}(\epsilon)) + c \log p(\ytar \mid \mathcal{G}^{\scriptscriptstyle(2)}(\epsilon))\Bigr]
\end{align*}
% However, in practise, the assumption does not always hold: because of the model capacity, the number of finetuning steps and other factors, $\pdisone$, which is finetuned on more images, cannot capture the non-semantics details as well as $\pdistwo$, which is finetuned on only one image, leading to the result that the samples drawn from $\pdisone$ have less non-semantics details than $\pdistwo$. Hence the additional terms becomes correction terms, eliminating the non-semantics effect.

% According to \cite{zhang2024your}, the corresponding pretrained model (before finetune) is a good choice for $\pshare$ in practice, as it is trained on a very large amount of data, make it capturing the non-semantics shared feature of the images.
In practice, however, the assumption $\mathbb{E}_{X \sim \pdisone}[\pshare(X)] = \mathbb{E}_{X \sim \pdistwo}[\pshare(X)]$ may not hold perfectly, due to differences in model capacity, finetuning steps, and other factors. For example, a model $\pdisone$ finetuned on more images might actually lose some non-semantic details compared to $\pdistwo$, which is finetuned on a single image. The extra $\pshare$-based terms can thus be viewed as a correction that accounts for these mismatches in non-semantic content.

Following \citet{zhang2024your}'s work, a good practical choice for $\pshare$ is often the pretrained model, because it has been trained on a large, diverse dataset and therefore captures broad, shared non-semantic features of images.

\subsection{Estimated Differences of KL Divergences} 
Because $\Delta$ depends on both the original concept $\cori$ and the target class $\ytar$, enumerating every possible $\Delta$ would be impractical. Therefore, similar to the adversarial‐example generation experiment introduced in Section~\ref{sec:expae}, we restrict our analysis to the 30 target classes listed in Appendix~\ref{app:listtarget}. We then compute the mean and variance of the estimated $\Delta$ (denoted $\tilde{\Delta}$), as shown in Table~\ref{table:kl}.

\begin{table}[H]
\scriptsize
  \caption{Estimated differences of KL divergences $\tilde{\Delta}$ for each concept.}
  \label{table:kl}
  \begin{center}
  \begin{tabular}{lr|lr|lr}
          \toprule
          Concept name & $\tilde{\Delta}$ & Concept name & $\tilde{\Delta}$ & Concept name & $\tilde{\Delta}$ \\
           \midrule
backpack & $-3507.28 \pm 2040.85$ & backpack\_dog & $-8372.57 \pm 860.92$ & bear\_plushie & $-466.69 \pm 1413.40$ \\
candle & $-3637.39 \pm 940.80$ & cat & $-4658.98 \pm 2423.05$ & cat2 & $-6654.28 \pm 1318.03$ \\
colorful\_sneaker & $-4954.90 \pm 420.40$ & dog & $-7159.63 \pm 696.25$ & dog2 & $-4814.63 \pm 470.95$ \\
dog3 & $-8303.66 \pm 2964.40$ & dog5 & $-8080.70 \pm 2808.13$ & dog6 & $-2380.29 \pm 1780.36$ \\
dog7 & $-10545.45 \pm 787.79$ & dog8 & $-12401.76 \pm 966.62$ & duck\_toy & $-2420.45 \pm 2334.07$ \\
fancy\_boot & $-5780.99 \pm 1489.92$ & grey\_sloth\_plushie & $-6848.96 \pm 1477.16$ & monster\_toy & $-7435.54 \pm 1698.99$ \\
pink\_sunglasses & $-5711.26 \pm 799.22$ & poop\_emoji & $-219.27 \pm 929.88$ & rc\_car & $-5046.80 \pm 3212.23$ \\
robot\_toy & $-7008.27 \pm 710.45$ & shiny\_sneaker & $-3203.81 \pm 4447.21$ & teapot & $-7327.87 \pm 486.00$ \\
vase & $-8806.76 \pm 2435.28$ & wolf\_plushie & $-325.80 \pm 2811.99$ \\
          \bottomrule
        \end{tabular}
  \end{center}
\end{table}

\section{Sensitivity Study for Sample Selection}
As described in Section~\ref{sec:sampleselection}, when generating adversarial examples we first sample $M$ candidate adversarial images and then select the “best” one. To investigate the effect of different values of $M$, we perform a sensitivity study. In this study, $|\cori|$ is set to either 1 or 30, while $M$ is set to 1, 5, or 10. Note that $|\cori| = 1$ corresponds to ProbAttack~\citep{zhang2024constructing}.

From Table~\ref{table:imagenetsensitivity}, we observe that as $M$ increases, the white-box attack success rate rises significantly. In the $|\cori|=1$ case, the transferability also increases as $M$ grows. However, in the $|\cori|=30$ case, increasing $M$ results in a decrease in transferability. This happens because we employ a conservative strategy (introduced in Section~\ref{sec:sampleselection}) to pick images that just barely fool the classifier while preserving the original concept. Evidence of this conservative strategy can be seen in Table~\ref{table:imgqualitysensitivity}: when $|\cori|=30$, increasing $M$ yields higher image quality and greater similarity to the original image. In contrast, when $|\cori|=1$, the ranking criterion is more influential during the sample selection stage, so the conservative strategy does not take effect. Consequently, as $M$ grows, the image quality tends to decrease.

\begin{table}[ht]
\scriptsize
  \caption{Targeted attack success rates (\%) on ImageNet classifiers. In the white-box setting, success is counted when the target class is the top prediction. For transferability, we report top 100 success rates, as top 1 success was uniformly low across all methods.}
  \label{table:imagenetsensitivity}
  \begin{center}
  \begin{tabular}{l|cccccc}
          \toprule
           & $|\cori|=1$ & $|\cori|=1$ & $|\cori|=1$ & $|\cori|=30$ & $|\cori|=30$ & $|\cori|=30$ \\
           & $M=1$ & $M=5$ & $M=10$ & $M=1$ & $M=5$ & $M=10$\\
           \midrule
           \textbf{White-box} & \multicolumn{6}{c}{\textbf{Top1}} \\
           \midrule
           Resnet 50 & 26.03 & 50.38 & 59.23 & 81.41 & 96.28 & \textbf{97.82} \\
           \midrule 
           \textbf{Transferability} & \multicolumn{6}{c}{\textbf{Top100}} \\
           \midrule
VGG19            & 16.41 & 17.82 & 18.97 & \textbf{30.00} & 23.21 & 20.90 \\
ResNet 152       & 21.79 & 23.97 & 26.79 & \textbf{43.85} & 37.69 & 35.13 \\
DenseNet 161     & 26.67 & 30.26 & 33.08 & \textbf{53.08} & 44.10 & 41.03 \\
Inception V3     & 16.03 & 17.44 & 18.21 & \textbf{22.95} & 19.36 & 19.87 \\
EfficientNet B7  & 16.15 & 16.54 & 17.44 & \textbf{25.90} & 21.54 & 20.38 \\
           \midrule 
           \textbf{Adversarial Defence} & \multicolumn{6}{c}{\textbf{Top100}} \\
           \midrule
Inception V3 Adv   & 18.97 & 17.82 & 20.00 & \textbf{25.26} & 20.64 & 20.51 \\
EfficientNet B7 Adv & 16.54 & 17.95 & 21.03 & \textbf{33.72} & 27.82 & 24.74 \\
Ensemble IncRes V2  & 14.10 & 14.87 & 17.31 & \textbf{24.74} & 18.08 & 17.44 \\
          \bottomrule
        \end{tabular}
  \end{center}
\end{table}

\begin{table}[ht]
\scriptsize
  \caption{Quantitative comparison of similarity to the original images and no reference image quality metrics for unrestricted adversarial examples.}
  \label{table:imgqualitysensitivity}
  \begin{center}
  \begin{tabular}{l|ccccccc}
          \toprule
           & \multirow{2}{*}{Clean} & $|\cori|=1$ & $|\cori|=1$ & $|\cori|=1$ & $|\cori|=30$ & $|\cori|=30$ & $|\cori|=30$ \\
           & & $M=1$ & $M=5$ & $M=10$ & $M=1$ & $M=5$ & $M=10$\\
           \midrule
\textbf{Similarity} & \\
$\uparrow$ User Study & N/A & N/A & N/A & 0.8041 & N/A & N/A & \textbf{0.9654} \\
$\uparrow$ Avg. Clip Score & 1.0 & \textbf{0.8953} & 0.8859 & 0.8581 & 0.8175 & 0.8286 & 0.8283 \\
\midrule
\textbf{Image Quality} & \\
$\uparrow$ MUSIQ-KonIQ & 65.0549 & 59.0779 & 59.7026 & 58.1563 & 62.8293 & \textbf{63.8795} & 63.7486 \\
$\uparrow$ MUSIQ-AVA & 4.3760 & 4.3016 & 4.3400 & 4.3130 & 4.5013 & \textbf{4.5356} & 4.5305 \\
$\uparrow$ TReS & 93.2127 & 86.4084 & 86.9480 & 84.3131 & 88.9667 & \textbf{90.6312} & 90.4488 \\
$\uparrow$ NIMA-AVA & 4.5595 & 4.5729 & 4.5851 & 4.5168 & 4.6804 & 4.7422 & \textbf{4.7575} \\
$\uparrow$ HyperIQA & 0.7255 & 0.6800 & 0.6808 & 0.6675 & 0.6880 & \textbf{0.6952} & 0.6947 \\
$\uparrow$ DBCNN & 0.6956 & 0.6303 & 0.6340 & 0.6161 & 0.6459 & 0.6564 & \textbf{0.6572} \\
$\uparrow$ ARNIQA & 0.7667 & 0.7222 & 0.7153 & 0.7009 & 0.7187 & 0.7323 & \textbf{0.7335} \\
          \bottomrule
        \end{tabular}
  \end{center}
\end{table}

\section{Finetuning Details}
\label{app:finetune}
In this section, we provide the key parameters required to fine-tune the models. For all parameters, please refer to the code repository of this paper.

\subsection{Finetuning Details of SDXL LoRAs}
In the SDXL LoRA finetuning described in Section~\ref{sec:augment} and Section~\ref{sec:dataprep}, we use a LoRA rank of 128 with a corresponding LoRA alpha of 128, and set the dropout rate to 0.05. We employ the AdamW optimizer and train for 250 epochs, using a learning rate of $10^{-4}$ for UNet parameters and $10^{-5}$ for text encoder parameters. For more detailed settings, please refer to the accompanying code.

\subsection{Finetuning Details of Distance Distributions}
For the diffusion model fine-tuning described in Section~\ref{sec:fitdist}, we set the learning rate to $10^{-6}$, use the AdamW optimizer, and train for 8000 steps per image in the fine-tuning set. More details are in the code repository. 

\section{List of the Selected Target Classes}
\label{app:listtarget}
Since ImageNet consists of 1000 classes and it is impractical to cover them all, we randomly selected 30 target classes. For details, please refer to Table~\ref{table:classes}.
\begin{table}[hbt]
\scriptsize
  \caption{List of the selected target classes}
  \label{table:classes}
  \begin{center}
  \begin{tabular}{l|l}
          \toprule
          Index & Description\\
          \midrule
          1 & goldfish, Carassius auratus\\
          2 & great white shark, white shark, man-eater, man-eating shark, Carcharodon carcharias\\
          7 & cock\\
          56 & king snake, kingsnake\\
          134 & crane\\
          151 & Chihuahua\\
          157 & papillon\\
          231 & collie\\
          254 & basenji\\
          309 & bee\\
          328 & sea urchin\\
          333 & hamster\\
          341 & hog, pig, grunter, squealer, Sus scrofa\\
          345 & hippopotamus, hippo, river horse, Hippopotamus amphibius\\
          368 & gibbon, Hylobates lar\\
          388 & giant panda, panda, panda bear, coon bear, Ailuropoda melanoleuca\\
          404 & airliner\\
          407 & ambulance\\
          417 & balloon\\
          504 & coffee mug\\
          555 & fire engine, fire truck\\
          563 & fountain pen\\
          620 & laptop, laptop computer\\
          721 & pillow\\
          769 & rule, ruler\\
          817 & sports car, sport car\\
          894 & wardrobe, closet, press\\
          947 & mushroom\\
          955 & jackfruit, jak, jack\\
          963 & pizza, pizza pie\\
          \bottomrule
        \end{tabular}
  \end{center}
\end{table}

\newpage
\section{Details of the User Study}
\label{app:userstudy}
We employed a crowdsourcing approach by hiring five annotators to determine whether each adversarial example preserves the original concept. Since our concepts are all concrete objects, we used the term ``same item'' to convey the notion of ``same concept'' in a straightforward manner. Following the user study methods of \citet{song2018constructing} and \citet{zhang2024constructing}, all five annotators voted on whether they believed each adversarial example still represented the original concept.

\paragraph{Annotator Instructions.}
\begin{quote}
\textbf{In this study, you will see two images and be asked whether they show the ``same item.'' Please follow these guidelines when making your judgment:}
\begin{enumerate}
    \item \textbf{Shape/Form}
    \begin{itemize}
        \item If the overall shape (including any accessories) in both images closely matches or approximates each other, answer ``\textit{Yes}.''
        \item If the object in one image appears excessively distorted or deformed compared to the other, answer ``\textit{No}.''
    \end{itemize}

    \item \textbf{Accessories}
    \begin{itemize}
        \item Even if the second image has additional or fewer accessories, as long as it essentially represents the same item, answer ``\textit{Yes}.''
    \end{itemize}

    \item \textbf{Color}
    \begin{itemize}
        \item If there is no significant difference in color between the two images, answer ``\textit{Yes}.''
        \item If there is a clear and noticeable color difference that affects recognizing the item, answer ``\textit{No}.''
    \end{itemize}
\end{enumerate}
\end{quote}

Figure~\ref{fig:screenshot} shows the user interface for this study.

\begin{figure}[H]
  \centering
  \includegraphics[width=0.6\textwidth]{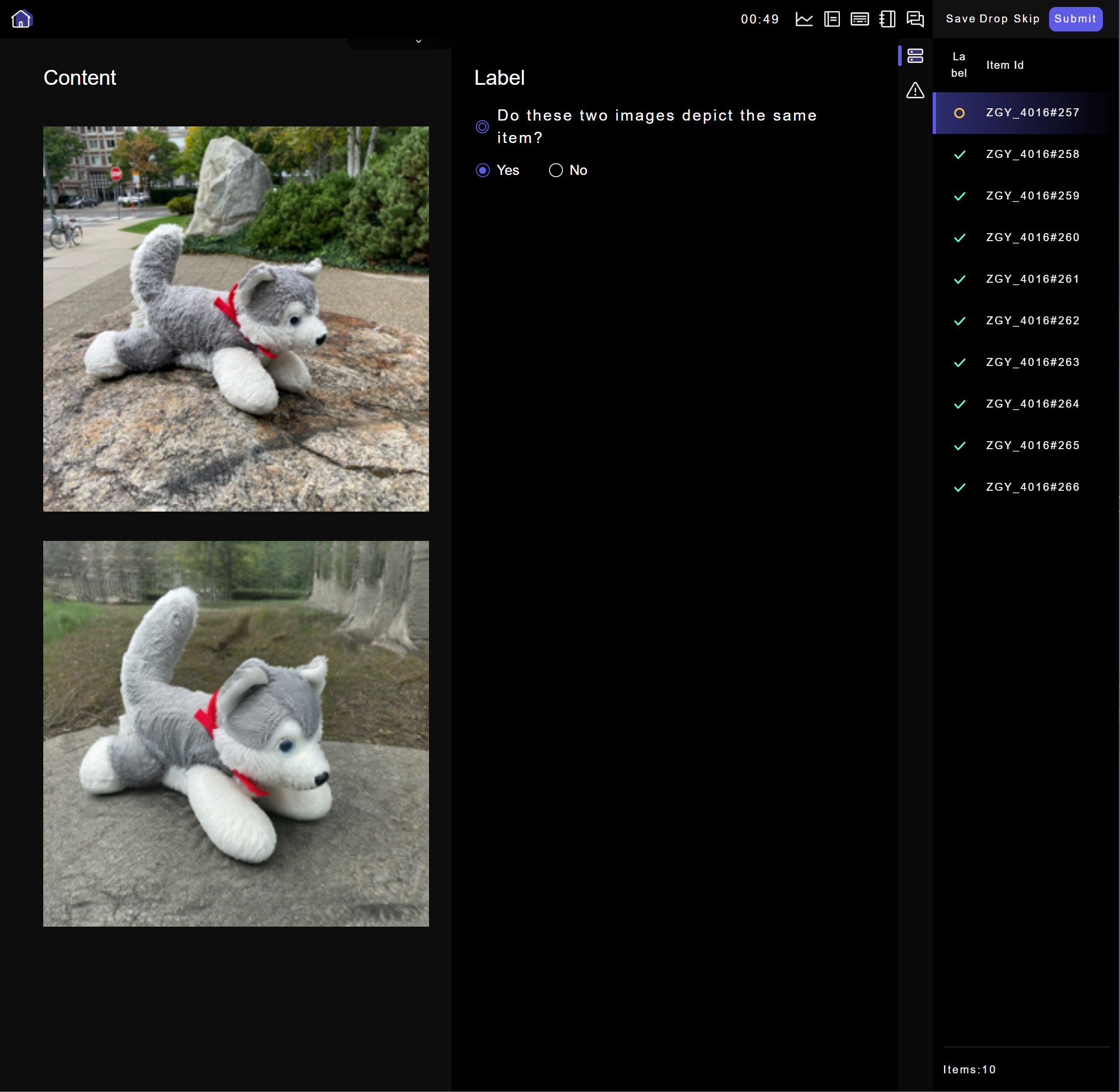}
  \caption{Screenshot of the user interface for user study. \label{fig:screenshot}}
\end{figure}

\section{Detailed Qualitative Analysis}
\label{app:morequal}

Due to space constraints, the qualitative comparison figures in the main text are relatively small. Therefore, in this section, we present enlarged qualitative comparisons between DiffAttack and our approach. We focus on comparing DiffAttack in particular because, although it achieves a high target attack success rate, it produces lower‐quality images, as shown in our user study and image quality tests. Here, we examine specific adversarial examples generated by DiffAttack to illustrate why its image quality is inferior.

\begin{itemize}
    \item \textbf{Border Collie (first column of Figure~\ref{fig:quallarger}).} DiffAttack removes all the dog's fur details and replaces its eyes with those of another canine, while the nose and tongue become stylized with a graffiti-like look, losing realistic details. In contrast, although our concept-based adversarial example changes the dog's pose, it preserves the animal's fur and facial details.
    
    \item \textbf{Shiny Sneakers (second column).} DiffAttack turns the sneakers into a shoe design with sharp edges, losing the smooth curves of the original model.
    
    \item \textbf{Chow Chow (third column).} DiffAttack alters all of the fur details and erases the dog's forelegs, resulting in a shape that resembles a drumstick rather than a chow chow.
    
    \item \textbf{Colorful Sneakers (fourth column).} While DiffAttack retains the purple front section and the blue rear section, it removes the yellow line in the middle and the adjacent cyan trim. Without these details, the sneaker's appearance changes to a different style altogether.
    
    \item \textbf{Cat (fifth column).} DiffAttack modifies the cat's fur patterns and gives it a distorted facial expression.
\end{itemize}

These observations further show that DiffAttack's adversarial examples degrade crucial details, resulting in a significant drop in image quality and diminished fidelity to the original concept. \textbf{To emphasize that this qualitative study is not cherry-picking, we provide the complete set of adversarial examples --- both from other methods and ours --- in the code repository.}

\begin{figure}[H]
  \centering
  \input{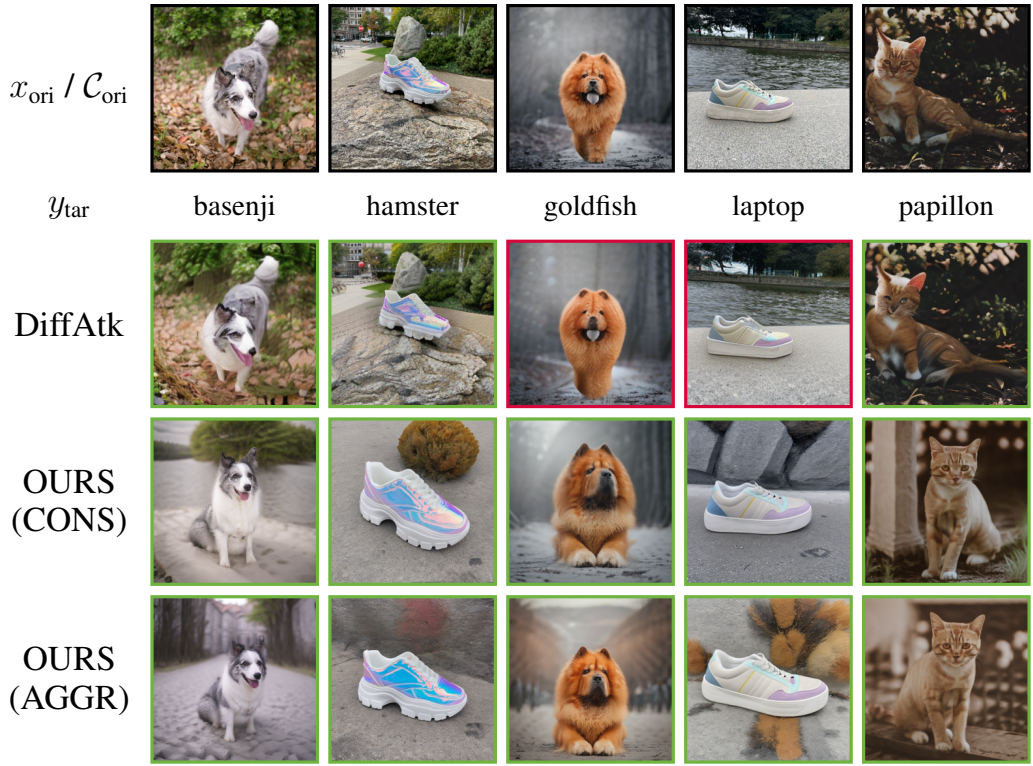}
  \caption{Qualitative comparison (zoomed in). (A \textbf{\color{mygreen}green} border indicates an example that successfully fools the classifier; \textbf{\color{myred}red} indicates failure.) \label{fig:quallarger}}
\end{figure}

% \section{Further Discussions}
\section{Additional Comments on Transferability}
As shown in Appendix~\ref{app:fullres}, our proposed method consistently achieves the best transferability among comparable approaches. However, its attack success rates are still considerably lower than those of methods explicitly optimized for transferability, such as works of \citet{zhu2022toward}{\color{reviseblue}, \citet{wang2021enhancing}, \citet{gubri2022lgv}} and \citet{collins2025natadiff}. We note that approaches targeting transferability often generate clear visual features of the target class. {\color{reviseblue}Especially} in the setting of unrestricted adversarial attacks, directly synthesizing objects of the target class within the image is also considered valid (see Figure 5 in the appendix of \citet{collins2025natadiff}). 

{\color{reviseblue}
From the probabilistic perspective of adversarial attack, this phenomenon is especially intuitive. As illustrated in Figure~\ref{fig:transferability}, $\pdis$ denotes the distance distribution, while $\pvic^{(1)}$ and $\pvic^{(2)}$ represent the victim distributions induced by two different classifiers. Without loss of generality, suppose adversarial examples are sampled from the product of the red distribution $\pvic^{(1)}$ and the blue distribution $\pdis$. The resulting adversarial examples concentrate in the region where these two distributions overlap. As shown in the figure, this region corresponds to low probability density under the green distribution $\pvic^{(2)}$, leading to poor transferability across classifiers.

% Although the overall transferability remains low, expanding $\pdis$ brings the overlap between the blue and red regions closer to the green distribution. Thus, it is reasonable that enlarging $\pdis$ leads to improved transferability.

% However, compared with methods explicitly designed to enhance transferability, our approach is still limited in this regard. For example, if one deliberately synthesizes features of the target class, the resulting samples move closer to the intersection of the red and green distributions, which can dramatically improve transferability. In this paper, our focus is primarily on understanding how increasing $\pdis$ affects image quality and transferability. While our framework could, in principle, be further optimized specifically for transferability—such as by generating features of the target class—we consider this beyond the scope of the current work and leave it as an avenue for future research.

Although the overall transferability of our method is relatively low, expanding $\pdis$ indeed brings the overlap between the blue and red distributions closer to the green distribution, making it reasonable that a larger $\pdis$ leads to an improvement in transferability.

However, it is important to emphasize that our goal fundamentally differs from methods explicitly designed to maximize transferability. Approaches that achieve very high transferability, such as \citet{zhu2022toward} and \citet{collins2025natadiff}, typically introduce strong visual features of the target class. Doing so moves samples toward the intersection of the red and green victim distributions, thereby dramatically improving transferability. Yet this strategy comes at the cost of injecting target-class semantics into the generated images, which directly violates our requirement of preserving the original identity-level concept.

In contrast, our method aims to investigate how enlarging the concept-based distance distribution $\pdis$ affects both image quality and transferability, while strictly maintaining the underlying identity. Under this constraint, extremely high transferability is not expected - and, in fact, cannot be achieved without compromising identity preservation. Although our framework could be extended to incorporate target-class features to boost transferability, we view this as beyond the scope of the current work and a promising direction for future research.
}

\begin{figure}[htb]
  \centering
  \begin{tikzpicture}[scale=1.0]

% pvic
\renewcommand\xoffset{0}
\renewcommand\yoffset{0}

\foreach \scale in {0.5, 0.4, 0.3, 0.2, 0.1} {
  \draw[myred, thick, fill, opacity=0.175]
  plot[smooth cycle, tension=0.9] coordinates {
    (\xoffset + 0*\scale, \yoffset + 1.2*\scale) 
    %(\xoffset + 1.6*\scale, \yoffset + 1.0*\scale) 
    (\xoffset + 3.8*\scale, \yoffset + 0.8*\scale) 
    (\xoffset + 0*\scale, \yoffset + -1.5*\scale) 
    (\xoffset + -3*\scale, \yoffset + 0*\scale) 
  };
}

\renewcommand\xoffset{0}
\renewcommand\yoffset{0}

\foreach \scale in {0.5, 0.4, 0.3, 0.2, 0.1} {
  \draw[mygreen, thick, fill, opacity=0.175]
  plot[smooth cycle, tension=0.9] coordinates {
    (\xoffset + 2*\scale, \yoffset + 3.8*\scale) 
    (\xoffset + 1.5*\scale, \yoffset + 0.8*\scale) 
    (\xoffset + -1.5*\scale, \yoffset + -3*\scale) 
    (\xoffset + -2.2*\scale, \yoffset + 0*\scale) 
  };
}

% pdis
\renewcommand\xoffset{2.5}
\renewcommand\yoffset{1.5}

\foreach \scale in {0.5, 0.4, 0.3, 0.2, 0.1} {
  \draw[myblue, thick, fill, opacity=0.175]
  plot[smooth cycle, tension=1] coordinates {
    (\xoffset + 0*\scale, \yoffset + 2.4*\scale) 
    (\xoffset + 2.7*\scale, \yoffset + 1*\scale) 
    (\xoffset + 3*\scale, \yoffset + -1*\scale) 
    (\xoffset + 1*\scale, \yoffset + -3.2*\scale) 
    (\xoffset + -1*\scale, \yoffset + -4*\scale)
    (\xoffset + -3.4*\scale, \yoffset + -2.8*\scale)
    (\xoffset + -4.5*\scale, \yoffset + 0*\scale)
  };
}

% labels
\node at (-1.7, 0.6) {\color{myred}$p_{\text{vic}}^{\text{(1)}}$};
\node at (-1.55, -1.55) {\color{mygreen}$p_{\text{vic}}^{\text{(2)}}$};
\node at (4.5, 1.9) {\color{myblue}$p_{\text{dis}}$};

\end{tikzpicture}
  \caption{\color{reviseblue} Transferability of adversarial attacks from a probabilistic perspective.
$\pdis$ denotes the distance distribution, while $\pvic^{(1)}$ and $\pvic^{(2)}$ represent the victim distributions induced by two different classifiers. Without loss of generality, assume that adversarial examples are sampled from the product of the red distribution $\pvic^{(1)}$ and the blue distribution $\pdis$. In this case, the generated adversarial examples concentrate in the overlap between these two distributions. As illustrated in the figure, this overlapping region has low probability density under the green distribution $\pvic^{(2)}$, resulting in poor transferability. Moreover, if a sample happens to contain strong visual evidence of the target class, then both classifiers would classify it as the target with high confidence, hence the high-density regions of $\pvic^{(1)}$ and $\pvic^{(2)}$ would necessarily overlap.\label{fig:transferability}}
  %Note that in both panels $\pdis$ is obtained by fine-tuning a diffusion model; the only difference is whether the model is fine-tuned on a single image \citep{zhang2024constructing} (referred to as ProbAttack) or on a set of images representing a concept.
  %Note that this figure only considers the case where $\pdis$ is derived from a well‐trained PGM (rather than a general distribution), see Section~\ref{sec:overlap} for details.
\end{figure}

\section{Compute Resources}
\label{app:compresource}
{\color{reviseblue}
All experiments are conducted on a single NVIDIA H100 Tensor Core GPU. Our method requires approximately 20 minute per concept for concept augmentation and around 8 hours per concept for diffusion model fine-tuning. For adversarial example generation, although our method is slightly slower, it is still within the same order of magnitude.

To ensure that the diffusion model learns a high-quality identity-level concept, we train it for a long duration during the concept finetuning stage. When the concept dataset is sufficiently large (e.g., the 30 images described in the main text), the diffusion model does not collapse even after many epochs of finetuning. However, in settings like ProbAttack, where finetuning is performed on a single image, the number of epochs must be carefully controlled to avoid model collapse. For this reason, the concept finetuning time for ProbAttack is approximately 20 minutes, whereas our concept-based method requires about 8 hours. 

In our experiments, we ensured that this 8-hour finetuning worked reliably across all concepts studied in the paper. In practice, the finetuning time can be reduced to some extent; however, maintaining an identity-level concept is inherently subjective and difficult to quantify, making it challenging to specify a universally optimal finetuning duration.

Table~\ref{table:compute} summarizes the time consumption of different methods across concept augmentation, concept fine-tuning, and adversarial example generation stages.
}

\begin{table}[ht]
\color{reviseblue}
\small
  \caption{Time consumption of each method across different stages.}
  \label{table:compute}
  \begin{center}
  \begin{tabular}{l|ccccc}
          \toprule
           & NCF & ACA & DiffAttack & ProbAttack & OURS \\
           \midrule
Concept augmentation (per concept) & - & - & - & - & 20m \\
Concept finetune (per concept) & - & - & - & 20m & 8h \\
Adversarial example generation (per image) & 30s & 20s & 12s & 75s & 75s \\
          \bottomrule
        \end{tabular}
  \end{center}
\end{table}

\section{Datasets and Licenses}
\label{app:license}
In this work, we use the DreamBooth dataset, which is licensed under the Creative Commons Attribution 4.0 International license.

\section{Practical Usage Guidelines and Example Scenarios}
\label{app:pracguidelines}

\subsection{Guidelines}

Although the term ``concept'' is difficult to define precisely, our method provides a clear definition: a concept can be specified by either a set $\cori$ or a probabilistic model. Users can therefore construct $\cori$ to include whatever concept variations they desire. In our experiments, we demonstrate a broad range of concept variations --- background, pose, and viewpoint --- leading to a highly diverse $\cori$ and scenarios like the one shown in Figure~\ref{fig:titlefig} (right).

In practice, constraints may prevent such extensive concept variations. For example, one might need to fix the background and viewpoint, leaving only the object’s pose or special variations (e.g., dressing a dog in different outfits). However, if the concept variations are too limited, the intersection between $p_\text{dis}$ and $p_\text{vic}$ may be insufficient, making it harder to generate adversarial samples. Practitioners should be mindful of this trade-off.

\subsection{Example Scenarios}

\subsubsection{Scenario 1: Prohibited Item Advertisements on Social Platforms}

Social or second-hand platforms often employ basic classifiers to preliminarily filter user-uploaded content for prohibited items (e.g., firearms, knives, protected animals). Malicious actors aim to sell prohibited items such as a specific brand and model of firearm, a particular knife, or a specific protected animal cub. They prefer to upload images capturing all detailed features of the prohibited items (i.e., preserving concept/identity, such as the specific firearm model or exact animal cub) to attract precise target buyers while bypassing platform moderation (untargeted attack).

In such cases, preserving the concept and identity is crucial to attracting potential customers, while background and perspective variations in images are insignificant. This perfectly aligns with the applicability of concept-based adversarial attacks.

Considering the potential societal harm of this scenario, as previously discussed, social platforms should implement multiple detection systems, including AI-based detection, to prevent such attacks.

\subsubsection{Scenario 2: Imperceptible Adversarial Patches in Real-world Scenarios (Adversarial Patch T-shirts)}

In practice, adversarial examples may be printed as patches. Early adversarial patches, though able to evade classifiers or detectors, often appeared unnatural or suspicious, making them easily noticeable by humans and limiting their effectiveness. Creating imperceptible adversarial patches remains a significant challenge.

Our method addresses this challenge by adopting brand images, logos, or cartoon characters as the concept. By altering the background or viewpoint of these concepts, we create realistic adversarial patches suitable for real-world applications.

More concretely, as demonstrated by \citet{wang2024diffpatch}, adversarial examples can be created as printed patches on T-shirts to deceive detection systems. However, unnatural or suspicious patches would prompt humans to comment, ``You're wearing a strange T-shirt,'' or, ``The logo on your T-shirt looks odd.'' In such scenarios, our concept-based adversarial attack excels by preserving logos, branding, or cartoon imagery while subtly changing the background or making the characters perform specific actions, resulting in adversarial patches that are difficult for humans to detect.

Specifically referencing \citet{wang2024diffpatch}'s work, their algorithm primarily focuses on single-image adversarial patch creation, leading to unnatural-looking printed watermarks on T-shirts. In contrast, our concept-based adversarial attack provides a better solution.

Given the social harm posed by these attacks, real-world detection systems should employ multi-layered detection with varying thresholds and multiple scales of analysis to prevent such vulnerabilities.

\section{Negative Social Effect and Mitigation Strategies}
\label{app:socialeffect}
Mitigating the risks of our proposed attack is a crucial responsibility for both the machine learning community and society. Potential strategies include:
\begin{itemize}
    \item Adversarial training using concept-based adversarial examples is a direct and general mitigation strategy. However, this may come at the cost of reduced baseline model accuracy.
    \item Given that our adversarial examples are directly generated from probabilistic generative models, contemporary AI-generated content detection techniques (open-source/commercial), such as frequency-domain analysis, heatmap analysis, anomaly detection, and counterfactual detection, can serve as effective countermeasures.
    \item In practical engineering scenarios, combining various methods according to specific application needs can significantly mitigate the threat posed by this attack.
\end{itemize}

The field of adversarial attacks continually evolves through the ongoing advancement of both attack and defense strategies. We hope our novel attack method draws sufficient attention from the community to further enhance AI Safety research.

\section{Broader Impacts of this work}
\label{app:broaderimpacts}

This study introduces concept-based adversarial attacks, providing valuable insights into the vulnerabilities of sophisticated classifiers. 

On the positive side, our work exposes critical weaknesses in systems previously considered robust, highlighting the need for enhanced security measures in classifier design. By identifying these vulnerabilities, we contribute to the development of more resilient artificial intelligence systems.

However, we acknowledge potential negative implications. The concept-based attack methods described could be misappropriated by malicious actors, for example, for identification purposes. We emphasize the importance of developing countermeasures against such exploitation and encourage the research community to consider ethical implications when building upon this work.

\section{LLM Disclaimer}
In this work, we use large language models solely for text polishing.

\section{Limitations}
\label{app:limitations}
Our approach is slower than similar methods because it requires time-consuming fine-tuning on a concept dataset $\cori$, which may also need to be built or expanded if unavailable. However, in adversarial attack scenarios, even one successful example can cause severe damage, highlighting the practical importance of our method.

{\color{reviseblue}
\section{Baseline Settings}

We list the hyperparameter settings for all compared methods in Table~\ref{tab:hyperparams}. Note that DiffAttack/ACA and ProbAttack/Concept-based Attack rely on fundamentally different attack mechanisms: the former are based on DDIM and operate in the latent space, while the latter directly sample from $\padv$. As a result, the number of sampling steps is not directly comparable across these methods, and the latter do not require any additional attack steps.

\begin{table}[t]
\centering
\color{reviseblue}
\scriptsize
\begin{tabular}{lccccp{4.2cm}}
\toprule
\textbf{Method} &
\textbf{Sampler / Solver} &
\textbf{\# Sampling Steps} &
\textbf{\# Attack Steps} &
\textbf{Step Size} &
\textbf{Notes} \\
\midrule
DiffAttack
& DDIM 
& 50 
& 30
& $0.01$ 
& -- \\

ACA
& DDIM 
& 50 
& 10 
& 0.04 
& -- \\

NCF
& -- 
& -- 
& 15
& 0.013 
& \# Color Sampling = 10 \\

ProbAttack
& DDPM 
& 250
& --
& -- 
& $|C_{\text{ori}}|=1$, $M = 10$, $c=30$. \\

Concept-based
& DDPM 
& 250
& --
& -- 
& $|C_{\text{ori}}|=30$, $M = 10$, $c=30$. \\
\bottomrule
\end{tabular}
\caption{Hyperparameter settings for all compared methods. 
ProbAttack and our concept-based adversarial attack both use the standard stochastic diffusion (DDPM) sampler with classifier-guided Langevin dynamics, not DDIM. All other parameters are at default.}
\label{tab:hyperparams}
\end{table}

}

\clearpage

\section{Full Main Experimental Results}
\label{app:fullres}
Due to space constraints, the main text reports only the white-box Top-1 and black-box Top-5 results using ResNet-50 as the surrogate classifier. In this section, we provide the full set of experimental results, including targeted attack success rates (white-box Top-1, black-box Top-1, Top-5, Top-10, and Top-100) in Section~\ref{app:asr}, and the similarity and image quality evaluations in Section~\ref{app:similarity}.

\subsection{Targeted Attack Success Rates}
\label{app:asr}

\begin{table}[H]
\scriptsize
  \caption{Attack success rates (\%) on ImageNet classifiers, with ResNet-50 serving as the white-box victim (surrogate) classifier.
  }
  \label{table:resnet50full}
  \begin{center}
  \scalebox{0.92}{
  \begin{tabular}{l|cccccc}
          \toprule[1pt]
           & NCF & ACA & DiffAttack & ProbAttack & OURS (CONS) & OURS (AGGR) \\
           \midrule
           \textbf{White-box} & \multicolumn{6}{c}{\textbf{Targeted-Top1}} \\
           \midrule
           ResNet 50 & 1.15 & 6.03 & 84.23 & 59.23 & \textbf{97.82} & \textbf{97.82} \\
           \midrule 
           \textbf{Transferability} & \multicolumn{6}{c}{\textbf{}} \\
           \midrule
VGG19            &  0.26 &  0.26 &  \textbf{1.67} &  0.38 &  0.00 & 1.15 \\
ResNet 152       &  0.13 &  0.38 &  \textbf{1.79} &  0.77 &  0.26 & \textbf{1.79} \\
DenseNet 161     &  0.00 &  0.26 &  1.67 &  0.51 &  0.26 & \textbf{2.82} \\
Inception V3     &  0.13 &  0.13 &  0.90 &  0.51 &  0.00 & \textbf{1.54} \\
EfficientNet B7  &  0.00 &  0.26 &  0.38 &  0.00 &  0.13 & \textbf{1.15} \\
           \midrule 
           \textbf{Adversarial Defence} & \multicolumn{6}{c}{\textbf{}} \\
           \midrule
\color{reviseblue}ResNet 50 Adv    & \color{reviseblue} 0.00 & \color{reviseblue} 0.38 & \color{reviseblue} 0.77 & \color{reviseblue} 0.26 &  \color{reviseblue} 0.00 & \color{reviseblue} \textbf{1.15}\\
Inception V3 Adv    &  0.00 & 0.26 &  \textbf{1.03} &  0.26 &  0.00 & \textbf{1.03}\\
EfficientNet B7 Adv &  0.00 & 0.51 &  0.64 &  0.38 &  0.26 & \textbf{1.41}\\
Ensemble IncRes V2  &  0.00 & 0.26 &  0.38 &  0.64 &  0.00 & \textbf{1.28} \\
\midrule[1pt]
\textbf{White-box} & \multicolumn{6}{c}{\textbf{Targeted-Top5}} \\
           \midrule
           ResNet 50 & 3.21 & 10.64 & 90.64 & 72.82 & \textbf{99.87} & \textbf{99.87} \\
           \midrule 
           \textbf{Transferability} & \multicolumn{6}{c}{\textbf{}} \\
           \midrule
VGG19            &  1.28 &  1.67 &  \textbf{4.36} &  2.44 &  2.05 & \textbf{4.36} \\
ResNet 152       &  1.41 &  1.92 &  8.33 &  3.33 &  2.82 & \textbf{8.72} \\
DenseNet 161     &  1.41 &  2.05 &  7.44 &  3.97 &  3.85 & \textbf{11.54} \\
Inception V3     &  0.90 &  1.41 &  3.08 &  2.56 &  1.28 & \textbf{4.74} \\
EfficientNet B7  &  1.41 &  1.67 &  1.79 &  1.41 &  1.28 & \textbf{3.97} \\
           \midrule 
           \textbf{Adversarial Defence} & \multicolumn{6}{c}{\textbf{}} \\
           \midrule
\color{reviseblue}ResNet 50 Adv    & \color{reviseblue} 0.90 & \color{reviseblue} 1.15 & \color{reviseblue} 3.46 & \color{reviseblue} 2.56 &  \color{reviseblue} 1.79 & \color{reviseblue} \textbf{5.64}\\
Inception V3 Adv    &  1.15 &  1.28 &  3.21 &  2.18 &  0.90 & \textbf{3.72}\\
EfficientNet B7 Adv &  0.26 &  1.15 &  2.05 &  2.31 &  1.67 & \textbf{6.41}\\
Ensemble IncRes V2  &  0.77 &  1.28 &  2.69 &  1.92 &  0.77 & \textbf{5.00} \\
\midrule[1pt]
\textbf{White-box} & \multicolumn{6}{c}{\textbf{Targeted-Top10}} \\
           \midrule
           ResNet 50 & 4.23 & 12.69 & 93.46 & 75.64 & \textbf{99.87} & \textbf{99.87} \\
\midrule 
           \textbf{Transferability} & \multicolumn{6}{c}{\textbf{}} \\
           \midrule
VGG19            &  2.69 &  2.69 &  \textbf{8.21} &  3.46 &  3.21 & 6.79 \\
ResNet 152       &  2.44 &  3.59 & 14.74 &  5.77 &  6.28 & \textbf{15.13} \\
DenseNet 161     &  2.31 &  3.72 & 12.69 &  6.15 &  8.72 & \textbf{19.62} \\
Inception V3     &  1.28 &  2.31 &  5.13 &  3.46 &  2.31 & \textbf{6.54} \\
EfficientNet B7  &  2.31 &  2.82 &  4.23 &  3.21 &  3.59 & \textbf{6.54} \\
           \midrule 
           \textbf{Adversarial Defence} & \multicolumn{6}{c}{\textbf{}} \\
           \midrule
\color{reviseblue}ResNet 50 Adv    & \color{reviseblue} 1.41 & \color{reviseblue} 1.92 & \color{reviseblue} 5.64 & \color{reviseblue} 3.59 &  \color{reviseblue} 2.69 & \color{reviseblue} \textbf{8.46}\\
Inception V3 Adv    &  1.79 &  1.79 &  5.13 &  3.46 &  2.31 & \textbf{6.54}\\
EfficientNet B7 Adv &  0.38 &  2.05 &  3.08 &  3.46 &  2.44 & \textbf{9.36}\\
Ensemble IncRes V2  &  1.28 &  2.05 &  3.85 &  3.08 &  1.79 & \textbf{7.18} \\
          \bottomrule[1pt]
        \end{tabular}}
  \end{center}
\end{table}

\begin{table}[H]
\scriptsize
  \caption{Attack success rates (\%) on ImageNet classifiers, with MobileNet v2 serving as the white-box victim (surrogate) classifier. 
  }
  \label{table:mnv2full}
  \begin{center}
  \scalebox{0.92}{
  \begin{tabular}{l|cccccc}
          \toprule[1pt]
           & NCF & ACA & DiffAttack & ProbAttack & OURS (CONS) & OURS (AGGR) \\
           \midrule
           \textbf{White-box} & \multicolumn{6}{c}{\textbf{Targeted-Top1}} \\
           \midrule
           MN-V2 & 3.85 & 12.95 & 91.92 & 51.15 & \textbf{97.31} & \textbf{97.31} \\
           \midrule 
           \textbf{Transferability} & \multicolumn{6}{c}{\textbf{}} \\
           \midrule
VGG19            &  0.00 &  0.13 &  \textbf{1.79} &  0.00 &  0.00 & 1.54 \\
ResNet 152       &  0.13 &  0.26 &  0.90 &  0.13 &  0.00 & \textbf{1.03} \\
DenseNet 161     &  0.00 &  0.26 &  0.90 &  1.03 &  0.00 & \textbf{1.28} \\
Inception V3     &  0.00 &  0.13 &  0.51 &  0.38 &  0.00 & \textbf{0.90} \\
EfficientNet B7  &  0.00 &  0.00 &  0.64 &  0.13 &  0.13 & \textbf{1.28} \\
           \midrule 
           \textbf{Adversarial Defence} & \multicolumn{6}{c}{} \\
           \midrule
Inception V3 Adv    &  0.00 & 0.26 &  0.77 &  0.13 &  0.13 & \textbf{1.15}\\
EfficientNet B7 Adv &  0.00 & 0.26 &  0.26 &  0.26 &  0.38 & \textbf{1.41}\\
Ensemble IncRes V2  &  0.00 & 0.38 &  0.77 &  0.26 &  0.13 & \textbf{1.28} \\
\midrule[1pt]
\textbf{White-box} & \multicolumn{6}{c}{\textbf{Targeted-Top5}} \\
           \midrule
           MN-V2 & 6.41 & 17.18 & 95.64 & 65.64 & \textbf{99.74} & \textbf{99.74} \\
           \midrule 
           \textbf{Transferability} & \multicolumn{6}{c}{\textbf{}} \\
           \midrule
VGG19            &  1.03 &  2.05 &  4.36 &  1.54 &  1.92 & \textbf{4.74} \\
ResNet 152       &  1.15 &  1.54 &  3.72 &  1.79 &  1.92 & \textbf{4.87} \\
DenseNet 161     &  1.15 &  1.54 &  5.51 &  3.46 &  2.56 & \textbf{7.18} \\
Inception V3     &  0.90 &  1.15 &  2.44 &  1.92 &  1.79 & \textbf{3.97} \\
EfficientNet B7  &  1.03 &  1.41 &  2.05 &  0.51 &  0.90 & \textbf{4.36} \\
           \midrule 
           \textbf{Adversarial Defence} & \multicolumn{6}{c}{\textbf{}} \\
           \midrule
Inception V3 Adv    &  0.90 &  1.15 &  2.56 &  1.67 &  0.90 & \textbf{3.08}\\
EfficientNet B7 Adv &  0.38 &  1.15 &  1.92 &  1.54 &  1.41 & \textbf{5.77}\\
Ensemble IncRes V2  &  0.38 &  1.67 &  2.56 &  1.15 &  0.90 & \textbf{4.36} \\
\midrule[1pt]
\textbf{White-box} & \multicolumn{6}{c}{\textbf{Targeted-Top10}} \\
           \midrule
           MN-V2 & 8.33 & 20.00 & 97.05 & 72.31 & \textbf{99.74} & \textbf{99.74} \\
\midrule 
           \textbf{Transferability} & \multicolumn{6}{c}{\textbf{}} \\
           \midrule
VGG19            &  1.28 &  2.95 &  \textbf{7.31} &  2.69 &  3.33 & 6.92 \\
ResNet 152       &  1.54 &  2.56 &  7.05 &  3.59 &  2.82 & \textbf{8.08} \\
DenseNet 161     &  1.67 &  3.08 &  9.10 &  5.00 &  4.36 & \textbf{12.44} \\
Inception V3     &  1.41 &  2.82 &  4.23 &  2.82 &  2.82 & \textbf{5.90} \\
EfficientNet B7  &  1.79 &  3.33 &  3.72 &  2.18 &  3.08 & \textbf{7.31} \\
           \midrule 
           \textbf{Adversarial Defence} & \multicolumn{6}{c}{\textbf{}} \\
           \midrule
Inception V3 Adv    &  1.28 &  1.41 &  4.10 &  2.56 &  2.05 & \textbf{5.90}\\
EfficientNet B7 Adv &  0.64 &  2.44 &  3.59 &  2.82 &  3.97 & \textbf{8.72}\\
Ensemble IncRes V2  &  0.77 &  2.31 &  3.72 &  2.95 &  2.31 & \textbf{7.18} \\
          \bottomrule[1pt]
        \end{tabular}}
  \end{center}
\end{table}

\begin{table}[H]
\scriptsize
  \caption{Attack success rates (\%) on ImageNet classifiers, with ViT-Base serving as the white-box victim (surrogate) classifier. % In the white-box setting, a targeted attack is counted as successful if the target class is ranked first. For transferability, we report top-100 success rates, counting an attack as successful if the target class is among the top 100 predictions (since top-1 success was uniformly low across all methods).
  }
  \label{table:vitbfull}
  \begin{center}
  \scalebox{0.92}{
  \begin{tabular}{l|cccccc}
          \toprule[1pt]
           & NCF & ACA & DiffAttack & ProbAttack & OURS (CONS) & OURS (AGGR) \\
           \midrule
           \textbf{White-box} & \multicolumn{6}{c}{\textbf{Targeted-Top1}} \\
           \midrule
           ViT-B & 0.90 & 3.46 & 81.41 & 63.85 & \textbf{85.51} & \textbf{85.51} \\
           \midrule 
           \textbf{Transferability} & \multicolumn{6}{c}{\textbf{}} \\
           \midrule
VGG19            &  0.13 &  0.26 &  0.51 &  0.13 &  0.13 & \textbf{1.28} \\
ResNet 152       &  0.00 &  0.38 &  0.77 &  0.00 &  0.00 & \textbf{1.41} \\
DenseNet 161     &  0.00 &  0.26 &  \textbf{1.15} &  0.51 &  0.00 & \textbf{1.15} \\
Inception V3     &  0.00 &  0.38 &  0.38 &  0.13 &  0.13 & \textbf{0.64} \\
EfficientNet B7  &  0.00 &  0.26 &  0.64 &  0.00 &  0.00 & \textbf{0.77} \\
           \midrule 
           \textbf{Adversarial Defence} & \multicolumn{6}{c}{} \\
           \midrule
Inception V3 Adv    &  0.00 & 0.26 &  0.38 &  0.13 &  0.13 & \textbf{0.90} \\
EfficientNet B7 Adv &  0.00 & 0.38 &  0.90 &  0.51 &  0.51 & \textbf{1.15} \\
Ensemble IncRes V2  &  0.00 & 0.13 &  1.03 &  0.13 &  0.00 & \textbf{1.15} \\
\midrule[1pt]
\textbf{White-box} & \multicolumn{6}{c}{\textbf{Targeted-Top5}} \\
           \midrule
           ViT-B & 2.44 &  6.67 & 92.44 & 89.74 & \textbf{94.87} & \textbf{94.87} \\
           \midrule 
           \textbf{Transferability} & \multicolumn{6}{c}{\textbf{}} \\
           \midrule
VGG19            &  1.03 &  1.41 &  1.79 &  1.28 &  1.41 & \textbf{3.33} \\
ResNet 152       &  0.64 &  1.67 &  3.33 &  1.03 &  1.67 & \textbf{4.74} \\
DenseNet 161     &  0.64 &  1.67 &  4.87 &  2.18 &  1.67 & \textbf{5.51} \\
Inception V3     &  0.77 &  1.41 &  2.95 &  1.54 &  0.90 & \textbf{3.08} \\
EfficientNet B7  &  0.90 &  1.92 &  2.69 &  1.28 &  1.15 & \textbf{3.08} \\
           \midrule 
           \textbf{Adversarial Defence} & \multicolumn{6}{c}{\textbf{}} \\
           \midrule
Inception V3 Adv    &  0.51 &  1.92 &  3.21 &  0.64 &  0.90 & \textbf{3.46}\\
EfficientNet B7 Adv &  0.26 &  1.79 &  3.08 &  1.67 &  1.41 & \textbf{3.85}\\
Ensemble IncRes V2  &  0.51 &  1.41 &  3.08 &  1.28 &  0.90 & \textbf{3.21} \\
\midrule[1pt]
\textbf{White-box} & \multicolumn{6}{c}{\textbf{Targeted-Top10}} \\
           \midrule
           ViT-B & 3.46 & 8.97 & 95.13 & 92.05 & \textbf{95.51} & \textbf{95.51} \\
\midrule 
           \textbf{Transferability} & \multicolumn{6}{c}{\textbf{}} \\
           \midrule
VGG19            &  1.79 &  2.69 &  3.46 &  2.05 &  2.82 & \textbf{4.49} \\
ResNet 152       &  1.28 &  1.92 &  5.77 &  2.05 &  2.82 & \textbf{7.69} \\
DenseNet 161     &  1.15 &  2.82 &  6.92 &  3.08 &  4.23 & \textbf{8.59} \\
Inception V3     &  1.28 &  2.05 &  4.87 &  1.79 &  1.92 & \textbf{5.00} \\
EfficientNet B7  &  1.54 &  2.95 &  4.49 &  2.44 &  3.21 & \textbf{5.64} \\
           \midrule 
           \textbf{Adversarial Defence} & \multicolumn{6}{c}{\textbf{}} \\
           \midrule
Inception V3 Adv    &  0.77 &  2.18 &  5.13 &  2.44 &  2.05 & \textbf{5.26}\\
EfficientNet B7 Adv &  0.38 &  2.31 &  5.51 &  2.82 &  3.33 & \textbf{6.03}\\
Ensemble IncRes V2  &  0.90 &  2.31 &  4.87 &  2.31 &  2.56 & \textbf{5.00} \\
          \bottomrule[1pt]
        \end{tabular}}
  \end{center}
\end{table}

\begin{table}[H]
\color{reviseblue}
\scriptsize
  \caption{Attack success rates (\%) on ImageNet classifiers, with ConvNext serving as the white-box victim (surrogate) classifier.}
  \label{table:convnextfull}
  \begin{center}
  \scalebox{0.92}{
  \begin{tabular}{l|cccccc}
          \toprule[1pt]
           & NCF & ACA & DiffAttack & ProbAttack & OURS (CONS) & OURS (AGGR) \\
           \midrule
\textbf{White-box} & \multicolumn{6}{c}{\textbf{Targeted-Top1}} \\
           \midrule
ConvNext & 1.03 & 5.38 & 83.59 & 60.38 & \textbf{94.74} & \textbf{94.74} \\
           \midrule 
           \textbf{Transferability} & \multicolumn{6}{c}{\textbf{}} \\
           \midrule
VGG19 & 0.00 & 0.26 & \textbf{1.28} & 0.26 & 0.00 & 1.15 \\
ResNet 152 & 0.00 & 0.38 & 1.54 & 0.51 & 0.13 & \textbf{1.67} \\
DenseNet 161 & 0.00 & 0.26 & 1.54 & 0.51 & 0.13 & \textbf{2.31} \\
Inception V3 & 0.13 & 0.26 & 0.77 & 0.38 & 0.00 & \textbf{1.28} \\
EfficientNet B7 & 0.00 & 0.26 & 0.51 & 0.00 & 0.13 & \textbf{1.03} \\
           \midrule 
           \textbf{Adversarial Defence} & \multicolumn{6}{c}{\textbf{}} \\
           \midrule
Inception V3 Adv & 0.00 & 0.26 & 0.90 & 0.26 & 0.00 & \textbf{1.03} \\
EfficientNet B7 Adv & 0.00 & 0.51 & 0.77 & 0.38 & 0.38 & \textbf{1.28} \\
Ensemble IncRes V2 & 0.00 & 0.26 & 0.51 & 0.51 & 0.00 & \textbf{1.28} \\
\midrule[1pt]
\textbf{White-box} & \multicolumn{6}{c}{\textbf{Targeted-Top5}} \\
           \midrule
ConvNext & 2.95 & 9.36 & 91.15 & 78.85 & \textbf{98.59} & \textbf{98.59} \\
           \midrule 
           \textbf{Transferability-Top5} & \multicolumn{6}{c}{\textbf{}} \\
           \midrule
VGG19 & 1.15 & 1.54 & 3.46 & 2.05 & 1.79 & \textbf{4.10} \\
ResNet 152 & 1.15 & 1.79 & 6.92 & 2.69 & 2.44 & \textbf{7.56} \\
DenseNet 161 & 1.15 & 1.92 & 6.67 & 3.33 & 3.21 & \textbf{9.62} \\
Inception V3 & 0.90 & 1.41 & 3.08 & 2.18 & 1.15 & \textbf{4.23} \\
EfficientNet B7 & 1.28 & 1.79 & 2.05 & 1.41 & 1.28 & \textbf{3.72} \\
           \midrule 
           \textbf{Adversarial Defence} & \multicolumn{6}{c}{\textbf{}} \\
           \midrule
Inception V3 Adv & 0.90 & 1.41 & 3.21 & 1.67 & 0.90 & \textbf{3.59} \\
EfficientNet B7 Adv & 0.26 & 1.28 & 2.44 & 2.05 & 1.54 & \textbf{5.51} \\
Ensemble IncRes V2 & 0.64 & 1.28 & 2.82 & 1.67 & 0.77 & \textbf{4.49} \\
\midrule[1pt]
\textbf{White-box} & \multicolumn{6}{c}{\textbf{Targeted-Top10}} \\
           \midrule
ConvNext & 3.97 & 11.67 & 93.97 & 81.54 & \textbf{98.72} & \textbf{98.72} \\
           \midrule 
           \textbf{Transferability} & \multicolumn{6}{c}{\textbf{}} \\
           \midrule
VGG19 & 2.44 & 2.69 & \textbf{6.67} & 2.95 & 3.08 & 6.15 \\
ResNet 152 & 2.05 & 3.08 & 11.92 & 4.49 & 5.13 & \textbf{12.82} \\
DenseNet 161 & 1.92 & 3.46 & 10.90 & 5.13 & 7.18 & \textbf{16.41} \\
Inception V3 & 1.28 & 2.18 & 5.00 & 2.82 & 2.18 & \textbf{6.15} \\
EfficientNet B7 & 2.05 & 2.82 & 4.36 & 2.95 & 3.46 & \textbf{6.28} \\
           \midrule 
           \textbf{Adversarial Defence} & \multicolumn{6}{c}{\textbf{}} \\
           \midrule
Inception V3 Adv & 1.41 & 1.92 & 5.13 & 3.08 & 2.18 & \textbf{6.15} \\
EfficientNet B7 Adv & 0.38 & 2.18 & 3.72 & 3.21 & 2.69 & \textbf{8.21} \\
Ensemble IncRes V2 & 1.15 & 2.18 & 4.23 & 2.82 & 2.05 & \textbf{6.54} \\
          \bottomrule[1pt]
        \end{tabular}}
  \end{center}
\end{table}

\begin{table}[H]
\color{reviseblue}
\scriptsize
  \caption{Attack success rates (\%) on ImageNet classifiers, with ResNet-50 Adv serving as the white-box victim (surrogate) classifier.}
  \label{table:resnet50advfull}
  \begin{center}
  \scalebox{0.92}{
  \begin{tabular}{l|cccccc}
          \toprule[1pt]
           & NCF & ACA & DiffAttack & ProbAttack & OURS (CONS) & OURS (AGGR) \\
           \midrule
\textbf{White-box} & \multicolumn{6}{c}{\textbf{Targeted-Top1}} \\
           \midrule
ResNet-50 Adv & 0.90 & 5.13 & 81.79 & 61.28 & \textbf{94.23} & \textbf{94.23} \\
           \midrule 
           \textbf{Transferability} & \multicolumn{6}{c}{\textbf{}} \\
           \midrule
VGG19 & 0.13 & 0.38 & 1.67 & 0.51 & 0.00 & \textbf{2.31} \\
ResNet 152 & 0.00 & 0.51 & \textbf{2.56} & 1.03 & 0.38 & \textbf{2.56} \\
DenseNet 161 & 0.00 & 0.38 & 2.31 & 0.77 & 0.38 & \textbf{3.97} \\
Inception V3 & 0.00 & 0.26 & 1.28 & 0.77 & 0.00 & \textbf{2.18} \\
EfficientNet B7 & 0.00 & 0.38 & 0.51 & 0.00 & 0.26 & \textbf{1.67} \\
           \midrule 
           \textbf{Adversarial Defence} & \multicolumn{6}{c}{\textbf{}} \\
           \midrule
Inception V3 Adv & 0.00 & 0.51 & 1.67 & 0.51 & 0.00 & \textbf{1.79} \\
EfficientNet B7 Adv & 0.00 & 0.90 & 1.15 & 0.64 & 0.51 & \textbf{2.56} \\
Ensemble IncRes V2 & 0.00 & 0.51 & 0.64 & 1.15 & 0.00 & \textbf{2.31} \\
\midrule[1pt]
\textbf{White-box} & \multicolumn{6}{c}{\textbf{Targeted-Top5}} \\
           \midrule
ResNet-50 Adv & 2.82 & 9.62 & 89.74 & 75.00 & \textbf{98.08} & \textbf{98.08} \\
           \midrule 
           \textbf{Transferability} & \multicolumn{6}{c}{\textbf{}} \\
           \midrule
VGG19 & 1.03 & 2.31 & 5.77 & 3.33 & 2.82 & \textbf{6.15} \\
ResNet 152 & 1.15 & 2.69 & 11.28 & 4.49 & 3.85 & \textbf{11.79} \\
DenseNet 161 & 1.15 & 2.82 & 10.00 & 5.38 & 5.26 & \textbf{15.38} \\
Inception V3 & 0.77 & 1.92 & 4.36 & 3.46 & 1.79 & \textbf{6.67} \\
EfficientNet B7 & 1.15 & 2.31 & 2.56 & 1.92 & 1.79 & \textbf{5.51} \\
           \midrule 
           \textbf{Adversarial Defence} & \multicolumn{6}{c}{\textbf{}} \\
           \midrule
Inception V3 Adv & 0.90 & 1.92 & 5.13 & 3.46 & 1.41 & \textbf{6.15} \\
EfficientNet B7 Adv & 0.26 & 1.79 & 3.33 & 3.59 & 2.56 & \textbf{10.00} \\
Ensemble IncRes V2 & 0.64 & 1.92 & 4.36 & 3.08 & 1.15 & \textbf{7.95} \\
\midrule[1pt]
\textbf{White-box} & \multicolumn{6}{c}{\textbf{Targeted-Top10}} \\
           \midrule
ResNet-50 Adv & 3.85 & 11.67 & 92.95 & 77.56 & \textbf{98.97} & \textbf{98.97} \\
           \midrule 
           \textbf{Transferability} & \multicolumn{6}{c}{\textbf{}} \\
           \midrule
VGG19 & 2.18 & 3.72 & 8.97 & 4.62 & 4.36 & \textbf{10.77} \\
ResNet 152 & 2.05 & 4.87 & 18.97 & 7.44 & 8.08 & \textbf{19.49} \\
DenseNet 161 & 1.92 & 5.00 & 16.28 & 7.95 & 11.15 & \textbf{25.13} \\
Inception V3 & 1.03 & 3.21 & 6.92 & 4.62 & 3.08 & \textbf{8.72} \\
EfficientNet B7 & 1.92 & 3.85 & 5.64 & 4.36 & 4.87 & \textbf{8.72} \\
           \midrule 
           \textbf{Adversarial Defence} & \multicolumn{6}{c}{\textbf{}} \\
           \midrule
Inception V3 Adv & 1.41 & 2.69 & 8.21 & 5.38 & 3.59 & \textbf{10.51} \\
EfficientNet B7 Adv & 0.38 & 3.08 & 5.00 & 5.38 & 3.85 & \textbf{14.74} \\
Ensemble IncRes V2 & 1.03 & 3.08 & 6.15 & 4.74 & 2.82 & \textbf{11.41} \\
          \bottomrule[1pt]
        \end{tabular}}
  \end{center}
\end{table}

\clearpage

\subsection{Similarity and Image Quality}
\label{app:similarity}
Due to cost constraints, only the ResNet-50 results reported in the main text include the user study.

\begin{table}[H]
\scriptsize
  \caption{Quantitative comparison of similarity to the original images and no reference image quality metrics for unrestricted adversarial examples with ResNet-50 serving as the victim (surrogate) classifier.}
  \label{table:imgqualityres50}
  \begin{center}
  \begin{tabular}{l|ccccccc}
          \toprule
           & Clean & NCF & ACA & DiffAttack & ProbAttack & OURS (CONS) & OURS (AGGR) \\
           \midrule
\textbf{Similarity} & \\
$\uparrow$ User Study & N/A & 0.1859 & 0.2808 & 0.7577 & 0.8041 & \textbf{0.9654} & 0.8808 \\
$\uparrow$ Avg. Clip Score & 1.0 & \textbf{0.8728} & 0.7861 & 0.8093 & 0.8581 & 0.8283 & 0.8043 \\
\midrule
\textbf{Image Quality} & \\
$\uparrow$ HyperIQA & 0.7255 & 0.5075 & 0.6462 & 0.5551 & 0.6675 & \textbf{0.6947} 
 & 0.6809\\
$\uparrow$ DBCNN & 0.6956 & 0.5096 & 0.6103 & 0.5294 & 0.6161 & \textbf{0.6572} & 0.6399 \\
$\uparrow$ ARNIQA & 0.7667 & 0.5978 & 0.6879 & 0.6909 & 0.7009 & \textbf{0.7335} & 0.7154 \\
$\uparrow$ MUSIQ-AVA & 4.3760 & 3.8135 & 4.2687 & 4.0734 & 4.3130 & \textbf{4.5305} & 4.5250\\
$\uparrow$ NIMA-AVA & 4.5595 & 3.7916 & 4.4511 & 4.0589 & 4.5168 & \textbf{4.7575} & 4.7401\\
$\uparrow$ MUSIQ-KonIQ & 65.0549 & 50.5022 & 59.0840 & 52.5399 & 58.1563 & \textbf{63.7486} & 62.2217 \\
$\uparrow$ TReS & 93.2127 & 64.7050 & 85.8435 & 74.1167 & 84.3131 & \textbf{90.4488} & 88.0836 \\
          \bottomrule
        \end{tabular}
  \end{center}
\end{table}

\begin{table}[H]
\scriptsize
  \caption{Quantitative comparison of similarity to the original images and no reference image quality metrics for unrestricted adversarial examples with MN-V2 serving as the victim (surrogate) classifier.}
  \label{table:imgqualitymnv2}
  \begin{center}
  \begin{tabular}{l|ccccccc}
          \toprule
           & Clean & NCF & ACA & DiffAttack & ProbAttack & OURS (CONS) & OURS (AGGR) \\
           \midrule
\textbf{Similarity} & \\
$\uparrow$ Avg. Clip Score & 1.0 & \textbf{0.8783} & 0.7756 & 0.8197 & 0.8693 & 0.8229 & 0.7988 \\
\midrule
\textbf{Image Quality} & \\
$\uparrow$ HyperIQA    & 0.7255 & 0.5002 & 0.6486 & 0.5471 & 0.6808 & \textbf{0.6998} 
 & 0.6865\\
$\uparrow$ DBCNN       & 0.6956 & 0.5040 & 0.6175 & 0.5236 & 0.6310 & \textbf{0.6577} & 0.6402 \\
$\uparrow$ ARNIQA      & 0.7667 & 0.5972 & 0.6909 & 0.6872 & 0.7112 & \textbf{0.7328} & 0.7142 \\
$\uparrow$ MUSIQ-AVA   & 4.3760 & 3.7912 & 4.3042 & 4.0496 & 4.3236 & \textbf{4.6125} & 4.5700\\
$\uparrow$ NIMA-AVA    & 4.5595 & 3.7879 & 4.4532 & 4.0549 & 4.5317 & \textbf{4.8172} & 4.7778\\
$\uparrow$ MUSIQ-KonIQ & 65.0549 & 49.9072 & 59.5258 & 51.9209 & 59.7813 & \textbf{63.5363} & 61.6641 \\
$\uparrow$ TReS        & 93.2127 & 63.7643 & 86.2644 & 73.0392 & 86.4376 & \textbf{90.7757} & 88.9947 \\
          \bottomrule
        \end{tabular}
  \end{center}
\end{table}

\begin{table}[H]
\scriptsize
  \caption{Quantitative comparison of similarity to the original images and no reference image quality metrics for unrestricted adversarial examples with ViT-Base serving as the victim (surrogate) classifier.}
  \label{table:imgqualityvitb}
  \begin{center}
  \begin{tabular}{l|ccccccc}
          \toprule
           & Clean & NCF & ACA & DiffAttack & ProbAttack & OURS (CONS) & OURS (AGGR) \\
           \midrule
\textbf{Similarity} & \\
$\uparrow$ Avg. Clip Score & 1.0 & \textbf{0.8733} & 0.7681 & 0.8040 & 0.8586 & 0.8222 & 0.8104 \\
\midrule
\textbf{Image Quality} & \\
$\uparrow$ HyperIQA    & 0.7255 & 0.5006 & 0.6324 & 0.5475 & 0.6796 & \textbf{0.7077} 
 & 0.6961\\
$\uparrow$ DBCNN       & 0.6956 & 0.5027 & 0.5842 & 0.5222 & 0.6254 & \textbf{0.6652} & 0.6526 \\
$\uparrow$ ARNIQA      & 0.7667 & 0.5879 & 0.6765 & 0.6844 & 0.7120 & \textbf{0.7351} & 0.7195 \\
$\uparrow$ MUSIQ-AVA   & 4.3760 & 3.7822 & 4.3131 & 4.0400 & 4.2450 & \textbf{4.5401} & 4.5345\\
$\uparrow$ NIMA-AVA    & 4.5595 & 3.7666 & 4.4775 & 4.0321 & 4.4918 & 4.7706 & \textbf{4.7747}\\
$\uparrow$ MUSIQ-KonIQ & 65.0549 & 50.0413 & 57.6105 & 52.0604 & 59.3605 & \textbf{64.2536} & 62.4264 \\
$\uparrow$ TReS        & 93.2127 & 64.1109 & 83.6834 & 73.4362 & 85.8749 & \textbf{91.7463} & 89.7815 \\
          \bottomrule
        \end{tabular}
  \end{center}
\end{table}

\begin{table}[H]
\color{reviseblue}
\scriptsize
  \caption{Quantitative comparison of similarity to the original images and no reference image quality metrics for unrestricted adversarial examples with ConvNeXT serving as the victim (surrogate) classifier.}
  \label{table:imgqualityconvnext}
  \begin{center}
  \begin{tabular}{l|ccccccc}
          \toprule
           & Clean & NCF & ACA & DiffAttack & ProbAttack & OURS (CONS) & OURS (AGGR) \\
           \midrule
\textbf{Similarity} & \\
$\uparrow$ Avg. Clip Score 
& 1.0 & \textbf{0.8621} & 0.7542 & 0.7964 & 0.8467 & 0.8139 & 0.8011 \\
\midrule
\textbf{Image Quality} & \\
$\uparrow$ HyperIQA    
& 0.7255 & 0.4928 & 0.6281 & 0.5392 & 0.6703 & \textbf{0.6933} & 0.6810 \\
$\uparrow$ DBCNN       
& 0.6956 & 0.4951 & 0.5784 & 0.5179 & 0.6201 & \textbf{0.6531} & 0.6405 \\
$\uparrow$ ARNIQA      
& 0.7667 & 0.5835 & 0.6710 & 0.6801 & 0.7066 & \textbf{0.7294} & 0.7118 \\
$\uparrow$ MUSIQ-AVA   
& 4.3760 & 3.7511 & 4.2814 & 4.0206 & 4.2314 & \textbf{4.5116} & 4.5030 \\
$\uparrow$ NIMA-AVA    
& 4.5595 & 3.7524 & 4.4433 & 4.0215 & 4.4830 & 4.7464 & \textbf{4.7492} \\
$\uparrow$ MUSIQ-KonIQ 
& 65.0549 & 49.3812 & 57.0114 & 51.7324 & 58.9012 & \textbf{63.1025} & 61.3571 \\
$\uparrow$ TReS        
& 93.2127 & 62.9011 & 82.7410 & 72.1109 & 85.2442 & \textbf{90.9211} & 88.7724 \\
          \bottomrule
        \end{tabular}
  \end{center}
\end{table}

\begin{table}[H]
\color{reviseblue}
\scriptsize
  \caption{Quantitative comparison of similarity to the original images and no reference image quality metrics for unrestricted adversarial examples with ResNet-50 Adv. serving as the victim (surrogate) classifier.}
  \label{table:imgqualityres50adv}
  \begin{center}
  \begin{tabular}{l|ccccccc}
          \toprule
           & Clean & NCF & ACA & DiffAttack & ProbAttack & OURS (CONS) & OURS (AGGR) \\
           \midrule
\textbf{Similarity} & \\
$\uparrow$ Avg. Clip Score 
& 1.0 & \textbf{0.8556} & 0.7485 & 0.7877 & 0.8367 & 0.8059 & 0.7945 \\
\midrule
\textbf{Image Quality} & \\
$\uparrow$ HyperIQA    
& 0.7255 & 0.4854 & 0.6221 & 0.5297 & 0.6617 & \textbf{0.6875} & 0.6750 \\
$\uparrow$ DBCNN       
& 0.6956 & 0.4908 & 0.5718 & 0.5070 & 0.6078 & \textbf{0.6473} & 0.6342 \\
$\uparrow$ ARNIQA      
& 0.7667 & 0.5764 & 0.6654 & 0.6683 & 0.6932 & \textbf{0.7231} & 0.7070 \\
$\uparrow$ MUSIQ-AVA   
& 4.3760 & 3.7435 & 4.2637 & 4.0130 & 4.2223 & \textbf{4.5031} & 4.4966 \\
$\uparrow$ NIMA-AVA    
& 4.5595 & 3.7453 & 4.4342 & 4.0113 & 4.4676 & \textbf{4.7399} & 4.7351 \\
$\uparrow$ MUSIQ-KonIQ 
& 65.0549 & 49.3740 & 57.0028 & 51.7202 & 58.1513 & \textbf{63.0946} & 61.3527 \\
$\uparrow$ TReS        
& 93.2127 & 62.8961 & 82.7340 & 72.1034 & 84.3081 & \textbf{90.4438} & 88.0786 \\
          \bottomrule
        \end{tabular}
  \end{center}
\end{table}

\end{document}